\def\ci{\perp\!\!\!\perp}
\tikzstyle{var}=[circle,draw=black,fill=white,thick,minimum size=20pt,inner sep=0pt]
\tikzstyle{arr}=[->,>=stealth',draw=black,line width=1pt]
\newtheorem{prop}{Proposition}
\newtheorem{corollary}[prop]{Corollary}
\newtheorem{theorem}[prop]{Theorem}
\newtheorem{definition}[prop]{Definition}
\newtheorem{lemma}[prop]{Lemma}
\newcommand{\beq}{\begin{equation}}
\newcommand{\eeq}{\end{equation}}
\newcommand{\bea}[1]{\begin{equation}\begin{array}{#1}}
\newcommand{\eea}{\end{array}\end{equation}}
\newcommand{\beqn}{\begin{eqnarray}}
\newcommand{\eeqn}{\end{eqnarray}}
\renewcommand{\rho}{\varrho}
\newcommand{\processnext}[1]{%
  \ifx\listfinish#1\empty\else\listact{#1}\expandafter\processnext\fi}
\begin{document}

\title{Inferring latent structures via information inequalities}

\author{R. Chaves}
\affiliation{Institute for Physics, University of Freiburg, Rheinstrasse 10, D-79104 Freiburg, Germany}
\author{L. Luft}
\affiliation{Institute for Physics, University of Freiburg, Rheinstrasse 10, D-79104 Freiburg, Germany}
\author{T. O. Maciel}
\affiliation{Institute for Physics, University of Freiburg, Rheinstrasse 10, D-79104 Freiburg, Germany}
\affiliation{Physics Department, Federal University of Minas Gerais, Brazil}
\author{D. Gross}
\affiliation{Institute for Physics, University of Freiburg, Rheinstrasse 10, D-79104 Freiburg, Germany}
\affiliation{Freiburg Center for Data Analysis and Modeling, Germany}
\author{D. Janzing}
\affiliation{Max Planck Institute for Intelligent Systems, T\"ubingen, Germany}
\author{B. Sch\"olkopf}
\affiliation{Max Planck Institute for Intelligent Systems, T\"ubingen, Germany}

\begin{abstract}
One of the goals of probabilistic inference is to decide whether an
  empirically observed distribution is compatible with a candidate
  Bayesian network. However, Bayesian networks with hidden variables
  give rise to highly non-trivial constraints on the observed
  distribution. Here, we propose an information-theoretic approach,
  based on the insight that conditions on \emph{entropies} of Bayesian
  networks take the form of simple linear inequalities.  We describe
  an algorithm for deriving entropic tests for latent structures.  The
  well-known conditional independence tests appear as a special case.
  While the approach applies for generic Bayesian networks, we
  presently adopt the causal view, and show the versatility of the
  framework by treating several relevant problems from that domain:
  detecting common ancestors, quantifying the strength of causal
  influence, and inferring the direction of causation from
  two-variable marginals.
\end{abstract}

\maketitle

\section{Introduction}
Inferring causal relationships
from empirical data is one of the
prime goals of science.
A common scenario reads as follows: Given $n$ random variables $X_1,\dots,X_n$, infer
their causal relations from
a list of $n$-tuples i.i.d. drawn from $P(X_1,\dots,X_n)$.
To formalize causal relations, it has become popular to
use
directed acyclic graphs (DAGs) with random variables as nodes
(c.f.\
Fig.~\ref{fig:DAGS_instrumental})
and arrows meaning direct causal influence
\cite{Pearlbook,Spirtesbook}.
Such causal models  have been called {\it causal} Bayesian networks
\cite{Pearlbook},
as opposed to traditional Bayesian networks
that formalize conditional independence relations without having necessarily
a causal interpretation.
One of the tasks of
causal inference is to decide which causal Bayesian networks are
compatible with empirically observed data.

The most common way to infer the set of possible DAGs from
observations is based on the \emph{Markov condition} (c.f.\
Sect.~\ref{sec:info_approach})  stating which conditional statistical
independencies are implied by the
graph structure, and the \emph{faithfulness assumption} stating that
the joint distribution is generic
for the DAG in the sense that no additional
independencies hold \cite{Spirtesbook,Pearlbook}.
Causal inference via Markov condition and faithfulness
has been well-studied for the case where all variables  are
observable,
but some work also refers to
latent structures where only a subset is observable
\cite{Pearlbook,Richardson2002,Ali2005}.
In that case, we
are faced with the problem of characterizing the set
of \emph{marginal distributions} a given Bayesian network can give
rise to. If an observed distribution lies outside the set of marginals
of a candidate network, then that model can be rejected as an
explanation of the data.  Unfortunately, it is widely appreciated that
Bayesian networks involving latent variables impose highly
non-trivial
constraints on the
distributions compatible with it
\cite{Tian2002,Steeg2011,Kang2012a,Kang2012b}.

These technical difficulties stem from the fact that the conditional independencies amount to non-trivial algebraic conditions on probabilities.  More precisely,
the marginal regions are semi-algebraic sets that can, in principle,
be characterized by a finite number of polynomial equalities and
inequalities \cite{Geiger1998}. However, it seems that in
practice, algebraic statistics is still limited to very simple models.

In order to circumvent this problem, we propose an
information-theoretic approach for causal inference.  It is based on
an entropic framework for treating marginal problems that, perhaps
surprisingly, has recently been introduced in the context of Bell's
Theorem and the foundations of quantum mechanics
\cite{FritzChaves2012,Chaves2013b}.  The basic insight is that the \emph{algebraic} condition $p(x,y)=p_1(x)
p_2(y)$ for independence becomes a \emph{linear} relation
$H(X,Y)=H(X)+H(Y)$ on the level of entropies.  This opens up the
possibility of using computational tools such as linear programming to
find marginal constraints -- which contrasts pleasantly with the
complexity of algebraic methods that would otherwise be necessary.

\subsection{Results}
Our main message is that a significant amount of information about
causation is contained in the entropies of observable
variables and that there are relatively simple and systematic ways of
unlocking that information.
We will make that case by discussing a great variety of applications,
which we briefly summarize here.

After
introducing the geometric and algorithmic framework in
Sections~\ref{sec:info_approach} \& \ref{sec:algorithm},
we start with the applications in
Section~\ref{subsec:instrumental} which treats instrumentality tests.
There, we argue that the non-linear nature of entropy, together with
the fact that it is agnostic about the number of outcomes of a random
variable, can greatly reduce the complexity of causal tests.

Two points are made in Sec.~\ref{sec:bad_statistics}, treating an
example where the direction of causation between a set of variables is
to be inferred. Firstly, that marginal entropies of few
variables can carry non-trivial information about conditional
independencies encoded in a larger number of variables.
This may have practical and statistical advantages. Secondly, we
point out applications to tests for quantum non-locality.

In Sec.~\ref{sec:ancestors} we consider the problem of distinguishing between different hidden common ancestors causal structures.
While most of the entropic tests in this
paper have been derived using automated linear programming algorithms,
this section presents analytic proofs valid for any
number of variables.

Finally, Sec.~\ref{sec:quant_causal} details
three conceptually important realizations: (1) The framework can
be employed to derive quantitative lower bounds on the strength of
causation between variables. (2) The degree of violation of entropic
inequalities carries an operational meaning.
(3) Under some assumptions, we can exhibit novel conditions for
distinguishing dependencies created through common ancestors from
direct causation.

\section{The information-theoretic description of Bayesian networks}
\label{sec:info_approach}

In this section we introduce the basic technical concepts that are
required to make the present paper self-contained. More details can be
found in \cite{Pearlbook,FritzChaves2012,Chaves2013b}.

\subsection{Bayesian networks}

Here and in the following, we will consider $n$ jointly distributed
discrete random variables $(X_1, \dots, X_n)$. Uppercase letters label
random variables while lowercase label the values taken by these
variables, e.g.\, $p(X_i=x_i,X_j=x_j) \equiv p(x_i,x_j)$.

Choose a \emph{directed acyclic graph} (DAG) which has the $X_i$'s as
its vertices. The $X_i$'s form a \emph{Bayesian network} with respect to
the graph if every variable can be expressed as
a function of its
parents $PA_i$ and an unobserved noise term $N_i$, such that the $N_i$'s are
jointly independent.
%
That is the case if and only if the distribution is of the form
\begin{equation*}
	p(x) = \prod_{i=1}^n p (x_i | \mathrm{pa}_{i} ).
\end{equation*}
Importantly, this is
equivalent to demanding that the $X_i$ fulfill the \emph{local Markov
property}: Every $X_i$ is conditionally independent of its
non-descendants $ND_i$ given its parents $PA_i$: $X_i \ci  ND_i | PA_i$.

We allow some of the nodes in the DAG to stand for \emph{hidden variables} that are not directly observable.
Thus, the marginal distribution of the observed variables becomes
\begin{equation}
p(v)= \sum_{u} \prod_{i=1,\dots,m} p(v_i \vert \mathrm{pa}_{i})
\prod_{j=1,\dots,n-m} p(u_j \vert \mathrm{pa}_{j}),
\end{equation}
where $V=(V_1,\dots,V_m)$ are the observable variables and
$U=(U_1,\dots,U_{n-m})$
the hidden ones.

\subsection{Shannon Entropy cones}
Again, we
consider a collection of $n$ discrete random variables $X_1, \dots,
X_n$.
We denote the set of indices
of the random variables by
$[n]=\{1, \dots, n\}$ and its power set (i.e., the set of subsets) by
$2^{[n]}$. For every subset $S\in 2^{[n]}$ of indices, let $X_S$ be
the random vector $(X_i)_{i\in S}$ and denote by $H(S):=H(X_S)$ the
associated
Shannon entropy given by $H(X_S)=-\sum_{x_s}p(x_s)\log_2 p(x_s)$.
With this convention, entropy becomes a function
\begin{eqnarray*}
	H: 2^{[n]} \to \mathbbm{R}, \qquad S \mapsto H(S)
\end{eqnarray*}
on the power set. The linear space of all set functions
will be denoted by
$R_n$.  For every function $h\in R_n$ and $S\in2^{[n]}$, we
use the notations $h(S)$ and $h_S$ interchangeably.

The region
\begin{eqnarray*}
	\left\{ h \in R_n \,|\, h_S = H(S) \text{ for some entropy function
	} H \right\}
\end{eqnarray*}
of vectors in $R_n$ that correspond to entropies has been studied
extensively in information theory \cite{Yeung2008}. Its closure is
known to be a convex cone, but a tight and explicit description
is unknown.
However, there is a standard outer approximation which is
the basis of our work: the
\emph{Shannon cone}
$\Gamma_n$.
The
Shannon cone is the
polyhedral closed convex cone of set functions $h$
that respect the following set of linear inequalities:
\begin{eqnarray}
	h_{}([n]\setminus\{i\}) &\leq& h_{}([n])
	\label{shannonineqs_basic}
	\label{eqn:monotonicity} \\
	h_{}(S) + h_{}(S\cup\{i,j\}) &\leq& h_{}(S\cup \{i\}) + h_{}(S\cup \{j\})
	\nonumber 
	\\
	h_{}(\emptyset) &=& 0
	\nonumber 
\end{eqnarray}
%
for all $S \subset [n] \setminus\{i,j\}$, $i \neq j$ and $i, j\in
[n]$. These inequalities hold for entropy: The first relation  -- known as
 \emph{monotonicity}  --
states that the uncertainty about a
set of variables should always be larger than or equal to the
uncertainty about any subset of it. The second inequality
is the \emph{sub-modularity} condition which is
equivalent to
the positivity of the conditional mutual information  $I(X_i:X_j
\vert X_S)= H(X_{S \cup i}) +H(X_{ S \cup j})-H( X_{S
\cup\{i,j\}})-H(X_{S}) \geq 0$. The inequalities
above
are known as the \emph{elementary
inequalities} in information theory or the \emph{polymatroidal
axioms} in combinatorial optimization.
An inequality that follows from
the elementary ones
is said to be of
\emph{Shannon-type}.

The elementary inequalities encode the constraints that the entropies
of \emph{any} set of random variables are subject to.  If one further
demands that the random variables are a Bayesian network with respect
to some given DAG, additional relations between their entropies will
ensue. Indeed, it is a straight-forward but central realization for
the program pursued here, that CI relations
%
faithfully
translate to homogeneous linear constraints on entropy:
\begin{equation}\label{eqn:entropicmarkov}
	X \ci Y | Z
	\qquad
	\Leftrightarrow
	\qquad
	I(X : Y | Z) = 0.
\end{equation}
The conditional independencies (CI) given by the local Markov condition are sufficient to characterize distributions
that form a Bayesian network w.r.t.\ some fixed DAG. Any such
distribution exhibits further CI relations, which can be
algorithmically enumerated using the so-called \emph{$d$-separation
criterion} \cite{Pearlbook}.
Let $\Gamma_c$ be the subspace of $R_n$ defined by the equality
(\ref{eqn:entropicmarkov})
for all such conditional independencies.
In that language, \emph{the joint distribution of a set of random
variables obeys the Markov property w.r.t.\ to Bayesian network if and only if
its entropy vector lies in the polyhedral convex cone
$\Gamma_n^c:=\Gamma_n \cap \Gamma_c$}, that is, the distribution defines a valid entropy vector (obeying \eqref{shannonineqs_basic}) that is contained in $\Gamma_c$.
The rest of this paper is concerned with the information that can be
extracted from this convex polyhedron.

We remark that this framework can easily be generalized in various
directions. E.g., it is simple to incorporate certain quantitative
bounds on causal influence. Indeed, small deviations of conditional
independence can be expressed as $I(X:Y|Z)\leq \epsilon$ for some
$\epsilon>0$. This is a (non-homogeneous) linear inequality on
$R_n$. One can add any number of such inequalities to the definition
of $\Gamma_n^c$ while still retaining a convex polyhedron (if no
longer a cone). The linear programming algorithm presented below will
be equally applicable to these objects. (In contrast to entropies, the
set of probability distributions subject to quantitative bounds on
various mutual informations seems to be computationally and
analytically intractable).

Another generalization would be to replace Shannon entropies by other,
non-statistical, information measures. To measure similarities of
strings, for instance, one can replace $H$ with Kolmogorov complexity,
which (essentially) also satisfies the polymatroidal axioms
\eqref{shannonineqs_basic}.
Then, the conditional mutual
information
measures conditional algorithmic dependence.
Due to the algorithmic Markov condition, postulated in
\cite{Janzing2010}, causal structures in nature also imply algorithmic
independencies in analogy to the statistical case.
We refer the
reader to Ref. \cite{Steudel_Janzing2010} for further information
measures satisfying the polymatroidal axioms.

\subsection{Marginal Scenarios}

We are mainly interested in situations where not all joint
distributions are accessible. Most commonly, this is because the
variables $X_1, \dots, X_n$ can be divided into observable ones $V_1,
\dots, V_m$ (e.g.\ medical symptoms) and hidden ones $U_1, \dots,
U_{n-m}$ (e.g.\ putative genetic factors). In that case, it is natural
to assume that any subset of observable variables can be
\emph{jointly} observed. There are, however, more subtle situations
(c.f.\ Sec.~\ref{sec:bad_statistics}).
In quantum mechanics, e.g.,
position and momentum of a particle are individually measurable,
as is any combination of position and momentum of two distinct
particles -- however, there is no way to consistently assign a joint
distribution to both position and momentum of the same particle \cite{Bell1964}.

This motivates the following definition: Given a set of variables
$X_{1}, \dots, X_{n}$,
a \emph{marginal scenario} $\mathcal{M}$
is the collection of those subsets of $X_1, \dots, X_n$ that are
assumed to be jointly measurable.

Below, we analyze the Shannon-type inequalities that result
from a given Bayesian network and constrain
the entropies accessible in a marginal scenario $\mathcal{M}$.

\section{Algorithm for the entropic characterization of any DAG}
\label{sec:algorithm}

Given a DAG consisting of $n$ random variables and a marginal scenario
$\mathcal{M}$, the following
steps will produce all
Shannon-type inequalities for the marginals:

\begin{description}
				\item[Step 1] \emph{Construct a description of the unconstrained
				Shannon cone}. 
				This means enumerating all $n+\binom{n}{2} 2^{n-2}$ elementary
				inequalities given in \eqref{shannonineqs_basic}.

		\item[Step 2] \emph{Add causal constraints} presented as in
		(\ref{eqn:entropicmarkov}).
		This corresponds to
		employing the $d$-separation criterion to construct all
		conditional independence relations implied by the DAG.
		
		\item[Step 3]\emph{Marginalization}.
		Lastly, one has to eliminate all joint entropies not contained in
		$\mathcal{M}$.
\end{description}

The first two steps have been described in
Sec.~\ref{sec:info_approach}. We thus briefly discuss the
marginalization, first from a geometric, then from an algorithmic
perspective.

Given a set function $h: 2^{[n]}\to \mathbb{R}$, its restriction
$h_{|\mathcal{M}}:\mathcal{M}\to\mathbb{R}$ is trivial to compute: If
$h$ is expressed as a vector in $R_n$, we just drop all coordinates of
$h$ which are indexed by sets outside of $\mathcal{M}$.
Geometrically, this amounts to a projection $P_\mathcal{M}:
\mathbb{R}^{2^n} \to \mathbb{R}^{|\mathcal{M}|}$. The image of the
constrained cone $\Gamma_n^c$ under the projection $P_\mathcal{M}$ is
again a convex cone, which we will refer to as $\Gamma^{\mathcal{M}}$.
Recall that there are two dual ways of representing a polyhedral
convex cone: in terms of either its extremal rays, or in terms of the
inequalities describing its facets \cite{aliprantis2007cones}.
To determine the projection $\Gamma^{\mathcal{M}}$, a natural
possibility would be to calculate the extremal rays of $\Gamma^{c}_n$
and remove the irrelevant coordinates of each of them. This would
result in a set of rays generating $\Gamma^{\mathcal{M}}$.
However, Steps 1 \& 2 above give a representation of $\Gamma_n^c$ in
terms of inequalities. Also, in order to obtain readily applicable
tests, we would prefer an inequality presentation of
$\Gamma^{\mathcal{M}}$.
Thus, we have chosen an algorithmically more direct (if geometrically
more opaque) procedure by employing Fourier-Motzkin elimination -- a
standard linear programming algorithm for eliminating variables from
systems of inequalities \cite{Williams1986}.

In the remainder of the paper, we will discuss applications of
inequalities resulting from this procedure to causal inference.

\section{Applications}
\label{sec:applications}

\subsection{Conditions for Instrumentality}
\label{subsec:instrumental}
An instrument $Z$ is a random variable that under certain assumptions helps identifying the causal effect of a variable $X$ on another variable $Y$ \cite{Goldberger1972,Pearl1995,Bonet2001}.
The simplest example is given by the instrumentality DAG in Fig.~\ref{fig:DAGS_instrumental} (a), where $Z$ is an instrumental variable and the following independencies are implied: (i) $I(Z:Y\vert X,U)=0$ and
(ii) $I(Z:U)=0$. The variable $U$ represents all possible factors (observed and unobserved) that may effect $X$ and $Y$. Because conditions (i) and (ii) involve an unobservable variable $U$, the use of an instrument Z can only be 
\begin{figure}[t]
\vspace{0.6cm}
\center
\includegraphics[width=7cm]{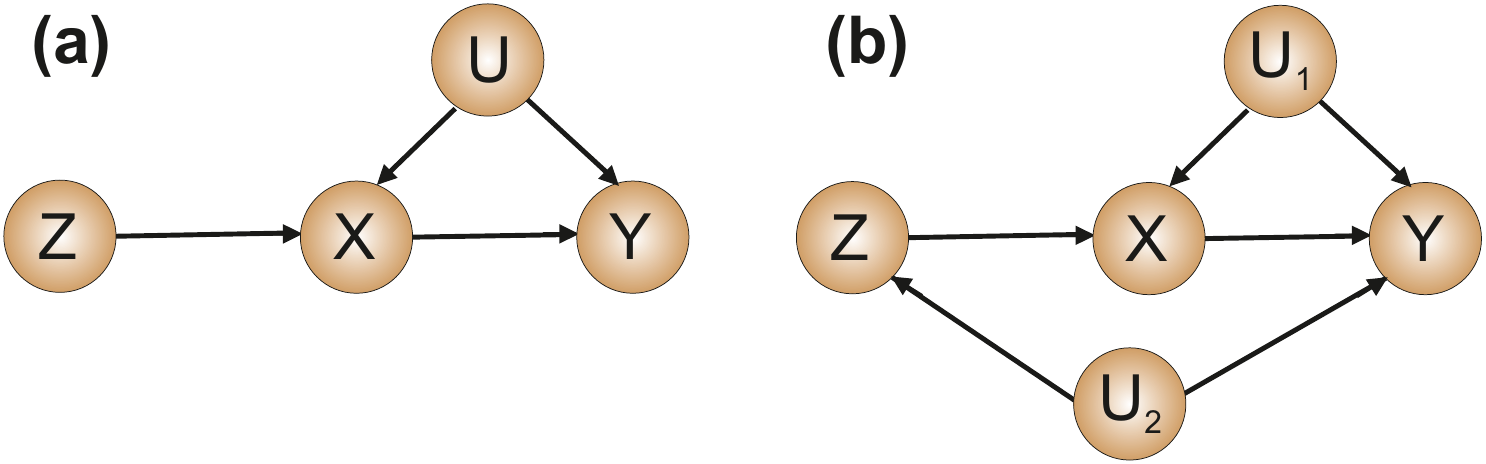}
\caption{DAG (a) represents the instrumental scenario. DAG (b) allows for a common ancestor between $Z$ and $Y$: unless some extra constraint is imposed (e.g.\ $I(Y,U_2) \leq \epsilon$) this DAG is compatible with any probability distribution for the variables $X$, $Y$ and $Z$.}
\label{fig:DAGS_instrumental}
\end{figure}
justified if the observed distribution falls inside the compatibility region implied by the instrumentality DAG.
The distributions compatible with this scenario can be written as
\begin{equation}
\label{pinstrumental}
p(x,y \vert z)=\sum_{u}p(u) p(y \vert x,u) p(x \vert z,u)
\end{equation}
Note that \eqref{pinstrumental} can be seen as a convex combination of
deterministic functions assigning the values of $X$ and $Y$
\cite{Pearl1995,Bonet2001,ramsahai2012causal}. Thus, the region of compatibility associated
with $p(x,y \vert z)$ is a polytope and all the probability
inequalities characterizing it can in principle be determined using
linear programming. However, as the number of values taken by the
variables increases, this approach becomes intractable \cite{Bonet2001}
(see below for further comments).
Moreover, if we allow for variations in the causal relations, e.g.\
the one shown in DAG (b) of Fig. \ref{fig:DAGS_instrumental}, the
compatibility region is not a polytope anymore and computationally
challenging algebraic methods would have to be used \cite{Geiger1999}. For
instance, the quantifier elimination method in \cite{Geiger1999} is
unable to deal with the instrumentality DAG even in the simplest case
of binary variables. We will show next how our framework can easily
circumvent such problems.

Proceeding with the algorithm described in Sec.~\ref{sec:algorithm},
one can see that after marginalizing over the latent variable $U$, the
only non-trivial entropic inequality constraining the instrumental
scenario is given by
\begin{equation}
I(Y:Z\vert X)+I(X:Z) \leq H(X).
\label{instrumental_entropic}
\end{equation}
By ``non-trivial'', we mean that (\ref{instrumental_entropic}) is
not implied by monotonicity and sub-modularity for the observable
variables.
\begin{figure}[t]
\vspace{0.6cm}
\center
\includegraphics[width=5cm]{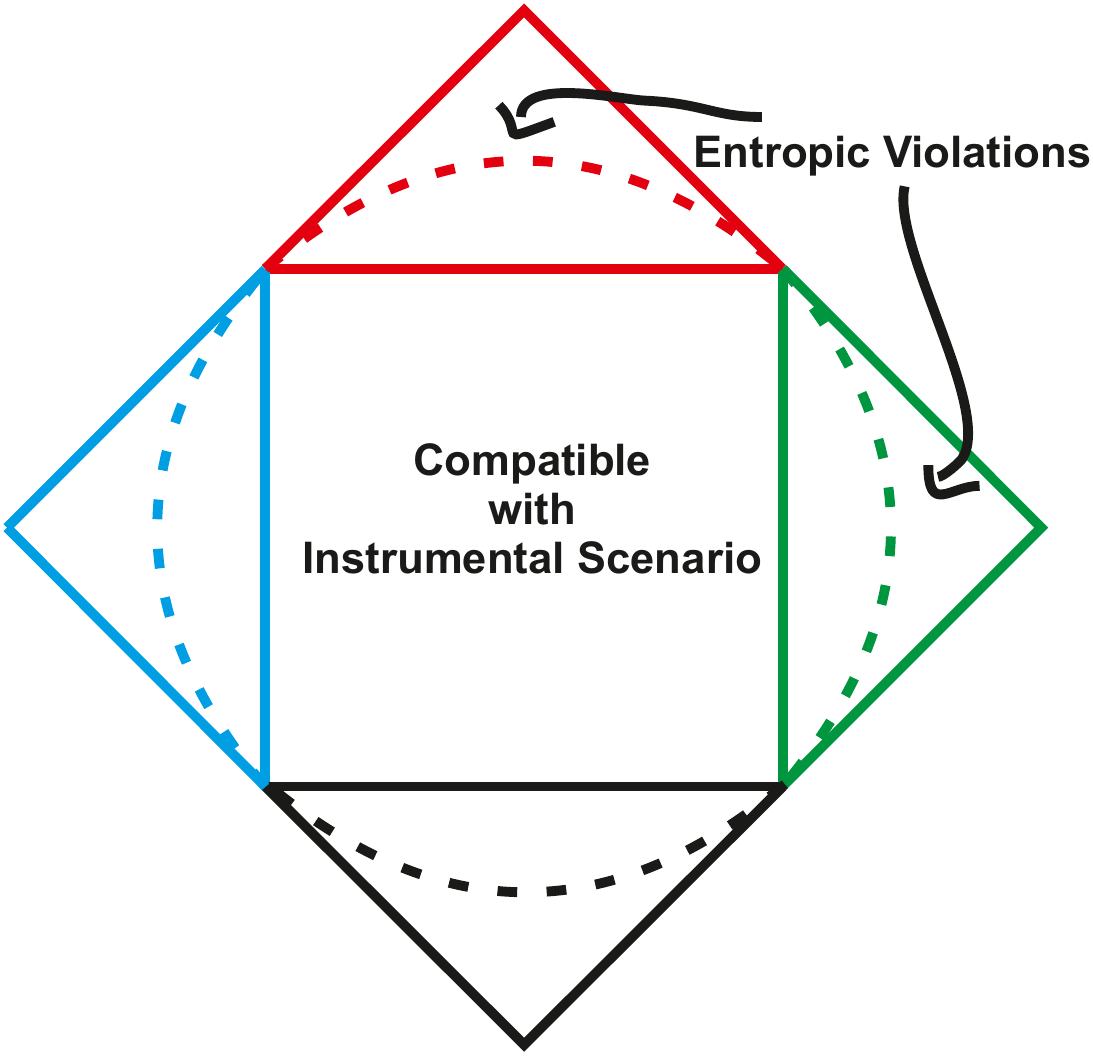}
\caption{A comparison between the entropic and
the probabilistic approach. The squares represent
the polytope of  distributions compatible with the instrumental DAG.
Each facet in the square corresponds to one of the $4$
non-trivial inequalities valid for binary variables
\cite{Pearl1995,Bonet2001}. The triangles over the squares represent probability distributions that fail to be
compatible with the instrumental constraints. Distributions outside
the dashed curve are detected by the
entropic inequality \eqref{instrumental_entropic}.
Due to its
non-linearity in terms of probabilities,
\eqref{instrumental_entropic} detects the non-compatibility associated
with different probability inequalities. See \cite{Chaves2013ns} for more details.}
\label{fig:polytope_instrumental}
\end{figure}
The causal interpretation of \eqref{instrumental_entropic} can be
stated as follows: Since $Z$ influence $Y$ only through $X$, if the
dependency between $X$ and $Z$ is large, then necessarily the
dependency between $Y$ and $Z$ conditioned on knowing $X$ should be
small.

We highlight the fact that, irrespective of how many values the
variables $X$, $Y$ and $Z$ may take (as long as they are discrete),
\eqref{instrumental_entropic} is the only non-trivial entropic
constraint bounding the distributions compatible with the
instrumentality test. This is in stark contrast with the probabilistic
approach, for which the number of linear inequalities increases
exponentially with the number of outcomes of the variables
\cite{Bonet2001}. There is, of course, a price to pay for this concise
description:
There are
distributions that are not compatible with the instrumental
constraints, but fail to violate (\ref{instrumental_entropic}).
In this sense, an entropic inequality is a necessary but
not sufficient criterion for compatibility.
However, it is still surprising that a single entropic inequality can
carry information about causation that is in principle contained only in
exponentially many probabilistic ones.
This effect stems from the non-linear nature of entropy and is illustrated in
Fig.~\ref{fig:polytope_instrumental}.  We remark that
the reduction of descriptional complexity resulting from the use of
non-linear inequalities occurs for other convex bodies as well. The simplest
example along these lines is the Euclidean unit ball $B$. It
requires infinitely many linear inequalities to be defined (namely
$B=\{ x \,|\, (x,y) \leq 1 \forall y, \|y\|_2 \leq 1\}$). These can, of
course, all be subsumed by the single non-linear condition
$\|x\|_2\leq 1$.

Assume now that some given distribution $p(x,y\vert z)$ is
incompatible with the instrumental DAG. That could be due to some
dependencies between $Y$ and $Z$ mediated by a common hidden variable
$U_2$ as shown in DAG (b) of Fig. \ref{fig:DAGS_instrumental}. Clearly,
this DAG can explain any distribution $p(x,y\vert z)$ and therefore is
not very informative. Notwithstanding, with our approach we can for
instance put a quantitative lower bound on how dependent $Y$ and
$U_2$ need to be.
Following the algorithm in Sec.~\ref{sec:algorithm}, one can see that
the only non-trivial constraint on the dependency between $Y$ and
$U_2$ is given by $I(Y:U_2) \leq H(Y \vert X)$. This inequality
imposes a kind of \emph{monogamy of correlations}: if the uncertainty
about $Y$ is small given $X$, their dependency is large, implying
that $Y$ is only slightly correlated with $U_2$, since the latter is
statistically independent of $X$.

\subsection{Inferring direction of causation}
\label{sec:bad_statistics}

As mentioned before, if all variables in the DAG are observed, the
conditional independencies implied by the graphical model completely
characterize the possible probability distributions \cite{Pearl1990}.
For example, the DAGs displayed in Fig.~\ref{fig:bad_stats} display a
different set of CIs. For both DAGs we have $I(X:Z \vert Y,W)=0$,
however for DAG (a), it holds that $I(Y:W \vert X)=0$ while for DAG
(b) $I(Y:W \vert Z)=0$. Hence, if the joint distributions of $(Y,W,X)$
and $(Y,W,Z)$ are accessible, then CI information can distinguish
between the two networks and thus reveal the ``direction of
causation''.

In this section, we will show that the same is possible even if only
two variables are jointly accessible at any time. We feel this is
relevant for three reasons.

First -- and somewhat subjectively -- we believe the insight to be
interesting from a fundamental point of view. Inferring the direction
of causation between two variables is a notoriously thorny issue,
hence it is far from trivial that it can be done from
information about several pairwise distributions.

The second reason is that there are situations where joint
distributions of many variables are unavailable due to practical or
fundamental reasons. We have already mentioned quantum mechanics as
one such example -- and indeed, the present DAGs can be related to
tests for quantum non-locality. We will briefly discuss the details
below.  But also purely classical situations are conceivable. For instance, Mendelian randomization is a good example where the joint distribution on all variables is often unavailable \cite{didelez2007mendelian}.

Thirdly, the ``smoothing effect'' of marginalizing may simplify the
statistical analysis when only few samples are available.
Conditioning on many variables or on variables that attain many
different values often amounts to conditioning on  events that
happened only once. Common $\chi^2$-tests for CI
\cite{upton2000conditional} involve divisions by empirical estimates of
variance, which lead to nonsensical results if no variance is
observed. Testing for CI in those situations requires strong
assumptions (like smoothness of dependencies)
\begin{figure}[t]
\vspace{0.6cm}
\center
\includegraphics[width=6cm]{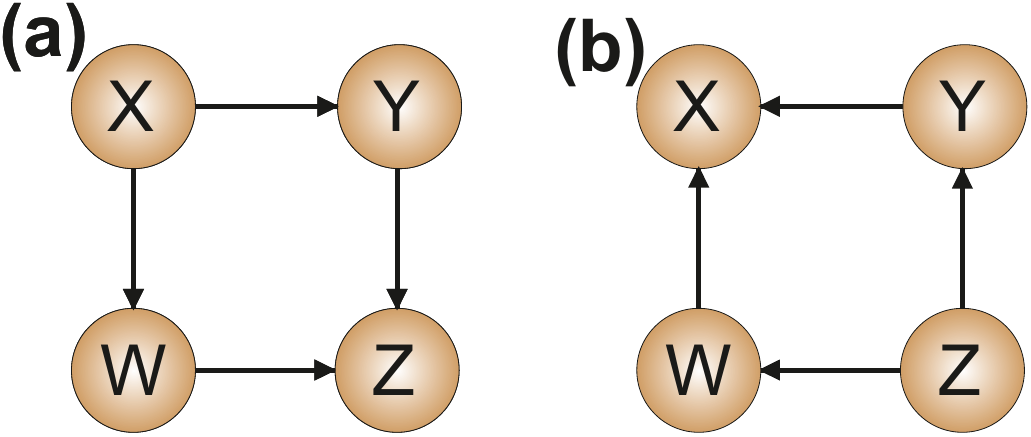}
\caption{DAGs with no hidden variables and opposite causation directions. The DAGs can be distinguished based on the CIs induced by them. However, if only pairwise information is available one must resort to the marginalization procedure described in Sec. \ref{sec:algorithm}.}
\label{fig:bad_stats}
\end{figure}
and remains a challenging research topic
\cite{CondHSIC,UAI_Kun_kernel}. Two-variable marginals, while
containing strictly less information than three-variable ones, show
less fluctuations and might thus be practically easier to handle. This benefit may not sound spectacular as long as it refers to $2$- versus $3$-variable marginals. However, in general, our formalism can provide inequality
constraints for $k$-variable marginals from equality constraints that involve $\ell$-variable marginals for $\ell\gg k$.

We note that causal inference schemes using only pairwise mutual
information is already known for trees, i.e., DAGs containing no
undirected cycles.
The data processing inequality implies that
for every node, the mutual information to a direct neighbor cannot be
smaller than the one with the neighbor of this neighbor. Hence one can
find adjacencies based on pairwise mutual information only.
This has been used e.g.\ for phylogenetic trees
\cite{grumbach94new,ChenCompression}.  In that sense, our results
generalize these ideas to DAGS with cycles.

The non-trivial constraints on two-variable entropies given by our algorithm for the DAG (a) of Fig.~\ref{fig:bad_stats} are:
\begin{align}
\label{nontrivial_pair_4}
& H_{Y}-H_{X}-H_{YW}+H_{XW}\leq 0    \\ \nonumber
& H_{W}-H_{X}-H_{YW}+H_{XY} \leq 0  \\ \nonumber
& H_{WZ}-H_{YW}-H_{XZ}+H_{XY} \leq 0 \\ \nonumber
& H_{YZ}-H_{YW}-H_{XZ}+H_{XW}  \leq 0 \\ \nonumber
& H_{Y}-H_{X}+H_{W}-H_{WZ}-H_{YZ}+H_{XZ} \leq 0 \\ \nonumber
& H_{Z}-H_{X}-H_{YW}-H_{XZ}+H_{XW} +H_{XY} \leq 0 \\ \nonumber
& H_{Z}+H_{X}\\ \nonumber
+&H_{YW}+H_{XZ}-H_{XW}-H_{XY}-H_{WZ}-H_{YZ}  \leq 0.
\end{align}
The ones for DAG (b) are obtained by the substitution $X\leftrightarrow
Z$. Invariant under this, the final inequality is valid for both scenarios.
In contrast, the first six inequalities can be used to distinguish the
DAGs.

As an example, one can consider the following structural equations
compatible only with the DAG (b): $Z$ is a uniformly distributed
$m$-valued random variable, $Y=W=Z$, and $X= Y \oplus W$ (addition
modulo $m$).
%
A direct calculation shows that the first inequality in
\eqref{nontrivial_pair_4} is violated, thus allowing one to infer the
correct direction of the arrows in the DAG.

As alluded to before, we close this section by mentioning a connection
to quantum non-locality \cite{Bell1964}.  Using the linear programming
algorithm, one finds that the final inequality in
(\ref{nontrivial_pair_4}) is actually valid for \emph{any}
distribution of four random variables, not only those that constitute
Bayesian networks w.r.t.\ the DAGs in Fig.~\ref{fig:bad_stats}. In
that sense it seems redundant, or, at best, a sanity check for
consistency of data. It turns out, however, that it can be put to
non-trivial use. While the purpose of causal inference is to check
compatibility of data with a presumed causal structure, the task of
quantum non-locality is to devise tests of compatibility with
classical probability theory as a whole. Thus, if said inequality is
violated in a quantum experiment, it follows that there is no way to
construct a joint distribution of all four variables that is
consistent with the observed two-variable marginals -- and therefore
that classical concepts are insufficient to explain the experiment.

While not every inequality which is valid for all classical
distributions can be violated in quantum experiments, the constraints
in (\ref{nontrivial_pair_4}) do give rise to tests with that property.
%
%
To see this, we further marginalize over $H(X,Z)$ and $H(Y,W)$
to obtain
\begin{equation}
\label{pair_4}
H_{XY}+H_{XW} +H_{YZ} -H_{WZ} -H_{Y}-H_{X}  \leq 0
\end{equation}
(and permutations thereof). These relations have been studied as the
``entropic version of the CHSH Bell inequality''
in the physics literature
\cite{Braunstein1988,FritzChaves2012,Chaves2013b}, where it is shown
that (\ref{pair_4}) can be employed to witness that certain
measurements on quantum systems do not allow for a classical model.
\subsection{Inference of common ancestors in semi-Markovian models}
\label{sec:ancestors}

In this section, we re-visit in greater generality the problem considered in \cite{Steudel2010}: using entropic conditions
to distinguish between hidden common ancestors.

Any distribution of a set of $n$ random variables can
be achieved if there is one latent parent (or
\emph{ancestor}) common to all of them \cite{Pearlbook}.
However, if the dependencies
can also be obtained from a less expressive DAG -- e.g.\ one
where at most two of the observed variables share an ancestor --
then Occam's Razor
would suggest that this model is preferable.  The question is then:
what is the simplest common ancestor causal structure explaining a
given set of observations?

One should note that unless we are able to intervene in the system
under investigation, in general it may be not possible to distinguish
direct causation from a common cause. For instance, consider the DAGs
(a) and (c) displayed in Fig.~\ref{fig:triangle}. Both DAGs are
compatible with any distribution and thus it is not possible to
distinguish between them from passive observations alone. For this
reason and also for simplicity, we restrict our attention to semi-Markovian models where all
the observable variables are assumed to have no direct causation on
each other or on the hidden variables. Also, the hidden variables are
assumed to be mutually
independent. It is clear then that all dependencies between the
observed quantities can only be mediated by their hidden common
ancestors. We refer to such models as common ancestors (CM) DAGs. We reinforce, however, that our framework can also be applied in the most general case.
As will be explained in more details in Sec.~\ref{sec:quant_causal}, in some cases, common causes can be
distinguished from direct causation. Our framework can also be readily
applied in these situations.

We begin by considering the simplest non-trivial case, consisting of
three observed variables \cite{Steudel2010,Fritz2012,Chaves2013b}. If
no conditional independencies  between the variables
occur, then the graphs in
Fig.~\ref{fig:triangle} (a) and (b)
represent the only compatible CM DAGs.
Applying the algorithm described in
Sec. \ref{sec:algorithm} to the model (b), we find that
one
non-trivial class of constraints is given by
\begin{equation}\label{triangle_1}
	I(V_1:V_2)+I(V_1:V_3)     \leq H(V_1)
\end{equation}
and permutations thereof
\cite{Fritz2012,Chaves2013b}.

It is instructive to pause and interpret (\ref{triangle_1}).
It
states, for example, that if the dependency between $V_1$ and $V_2$
is maximal ($I(V_1:V_2)=H(V_1)$) then there should be no dependency at
all
between $V_1$ and $V_3$ ($I(V_1:V_2)=0$). Note that
$I(V_1:V_2)=H(V_1)$ is only possible if $V_1$ is a deterministic
function of the common ancestor $U_{12}$ alone. But if $V_1$ is
independent of $U_{13}$, it cannot depend on $V_3$ and
thus $I(V_1:V_3)=0$.

Consider for instance a distribution given by
\begin{equation}
\label{perf_corr}
p\left(  v_1,v_2,v_3 \right)  =\left\{
\begin{array}{ll}
1/2 & \text{, if } v_1=v_2=v_3\\
0 & \text{, otherwise}%
\end{array}
\right. ,
\end{equation}
This stands for a perfect correlation between all the three variables
and clearly cannot be obtained by pairwise common ancestors. This
incompatibility is detected by the violation of \eqref{triangle_1}.

We now establish the following generalization of \eqref{triangle_1} to an arbitrary number of
observables:
\begin{figure}[t]
\vspace{0.6cm}
\center
\includegraphics[width=8cm]{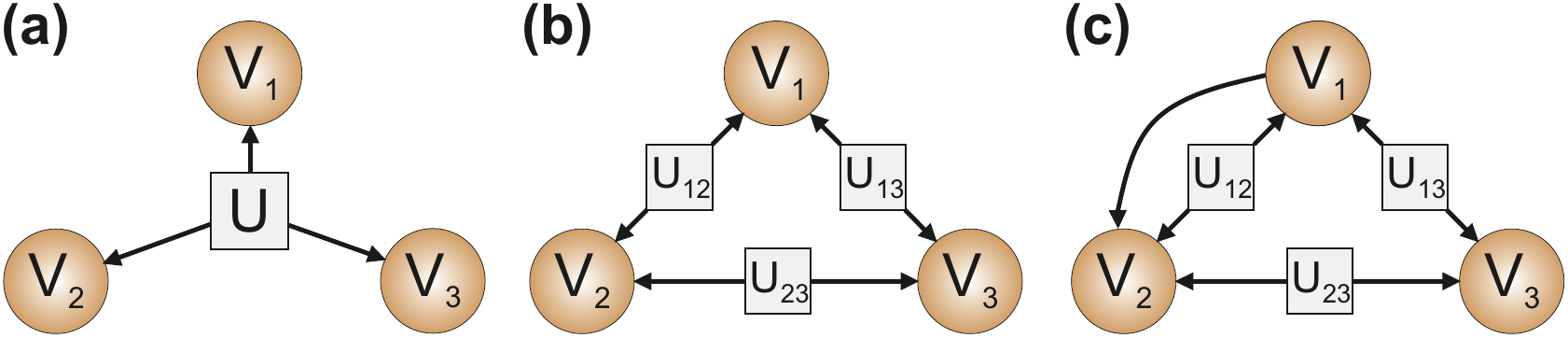}
\caption{
Models (a) and (b) are CM DAGs for three
observable variables $V_1, V_2, V_3$. Unlike (b), DAG (a) is
compatible with any observable distribution. DAG (c) involves a direct causal influence between the observable variable $V_1$ and $V_2$.
}
\label{fig:triangle}
\end{figure}

\begin{theorem}\label{thm:ineq_mn}
For any distribution that can be explained by a CM DAG where each of
the latent ancestors influences at most $m$ of the observed variables,
we have
\begin{equation}
\label{ineq_mn}
\sum_{\substack{
   i=1,\cdots,n \\
   i \neq j
  }}I(V_{i}:V_{j})\leq (m-1)H(V_{j}).
\end{equation}
\end{theorem}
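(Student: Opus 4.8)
The plan is to fix the index $j$, bound each pairwise term $I(V_i:V_j)$ by a mutual information between $V_j$ and the hidden ancestors it shares with $V_i$, and then combine these bounds with a combinatorial inequality for supermodular set functions.

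Write $A_k$ for the set of latent parents of $V_k$; in a CM DAG these are exactly the hidden ancestors of $V_k$, and they are mutually independent source nodes. From the structural equations $V_k=f_k(U_{A_k},N_k)$, with all the $N_k$'s and all the $U$'s jointly independent, one reads off that for $i\neq j$
\begin{equation}
V_i\ci V_j \mid U_{A_i\cap A_j},
\end{equation}
since, conditioned on the shared ancestors, $V_i$ is a function of $(U_{A_i\setminus A_j},N_i)$ and $V_j$ a function of $(U_{A_j\setminus A_i},N_j)$, and these two tuples involve disjoint sets of mutually independent primitive variables, hence are independent; equivalently this follows from $d$-separation, since every path between $V_i$ and $V_j$ is either a fork through a common ancestor (blocked by conditioning on $U_{A_i\cap A_j}$) or passes through a collider at an observed node, which has no descendants in a CM DAG. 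Inserting $U_{A_i\cap A_j}$ into the mutual information and using this independence gives
\begin{equation}
I(V_i:V_j)\le I\big(V_i,U_{A_i\cap A_j}:V_j\big)=I\big(U_{A_i\cap A_j}:V_j\big),
\end{equation}
so it suffices to prove $\sum_{i\neq j}I(V_j:U_{A_i\cap A_j})\le (m-1)H(V_j)$.

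Two structural facts then finish the job. First, the set function $f(S):=I(V_j:U_S)$ on subsets $S\subseteq A_j$ is monotone, vanishes on $\emptyset$, and is \emph{supermodular}: in $f(S)=H(U_S)-H(U_S\mid V_j)$ the first term is modular because the $U_a$ are independent and the second is submodular, and indeed $f(A)+f(B)-f(A\cup B)-f(A\cap B)=-I(U_{A\setminus B}:U_{B\setminus A}\mid U_{A\cap B},V_j)\le 0$. Second, each latent ancestor $U_a$ with $a\in A_j$ influences at most $m$ observed variables, one being $V_j$, so the index $a$ belongs to $A_i\cap A_j$ for at most $m-1$ indices $i\neq j$ — i.e.\ the family $\{A_i\cap A_j\}_{i\neq j}$ covers every element of $A_j$ at most $m-1$ times. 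The theorem now follows from the elementary lemma: \emph{if $f$ is a monotone supermodular set function on a ground set $G$ with $f(\emptyset)=0$, and $B_1,\dots,B_N\subseteq G$ cover every element at most $c$ times, then $\sum_\ell f(B_\ell)\le c\,f(G)$.} I would prove this by uncrossing: whenever two of the sets are incomparable, replace that pair $(B,B')$ by $(B\cup B',B\cap B')$; supermodularity ensures $\sum_\ell f(B_\ell)$ does not decrease, and $\mathbf 1_B+\mathbf 1_{B'}=\mathbf 1_{B\cup B'}+\mathbf 1_{B\cap B'}$ keeps the coverage vector fixed. Since $\sum_\ell|B_\ell|^2$ strictly increases at each step, the procedure terminates at a chain $C_1\supseteq\dots\supseteq C_N$; an element of the smallest non-empty $C_\ell$ is then contained in every non-empty member, so at most $c$ of the $C_\ell$ are non-empty, and monotonicity gives $\sum_\ell f(C_\ell)\le c\,f(G)$. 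Applying this with $G=A_j$, $f(S)=I(V_j:U_S)$, $B_i=A_i\cap A_j$, $c=m-1$ yields $\sum_{i\neq j}I(V_j:U_{A_i\cap A_j})\le (m-1)\,I(V_j:U_{A_j})\le (m-1)H(V_j)$, and with the previous display this is \eqref{ineq_mn}.

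The main obstacle I anticipate is the combinatorial lemma: a naive telescoping estimate — bounding $f(B_\ell)$ by a sum of single-element marginal gains $f(G)-f(G\setminus\{a\})$ — over-counts, since for parity-type dependencies $\sum_{a\in G}\big(f(G)-f(G\setminus\{a\})\big)$ can vastly exceed $f(G)$, so one really must exploit supermodularity through the uncrossing/chain argument (or an equivalent LP-duality argument). The verification of $V_i\ci V_j\mid U_{A_i\cap A_j}$ and of the supermodularity of $f$ is routine once one uses the defining CM-DAG assumptions — no arrows among the observed nodes, and mutually independent latent source nodes.
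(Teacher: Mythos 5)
Your proof is correct, and its core mechanism coincides with the paper's: both reduce $I(V_i:V_j)$ via data processing to a mutual information between $V_j$ and the shared latent ancestors, both exploit the fact that independence of the ancestors makes $B\mapsto I(V_j:U_B)$ supermodular (this is exactly the paper's supplemental Lemma~2, stated there as submodularity of $B\mapsto H(S|B)$), and both finish by counting that each ancestor of $V_j$ is shared with at most $m-1$ other observables. Where you genuinely diverge is in how the combinatorial bookkeeping is organized. The paper runs an explicit induction (Corollary~1 of the supplement) that transforms $\sum_i H(V_1|A_i)$ into a sum over the specific chain of sets $\bigcap_{|s|=k-1}[\cup_{i\in s}A_i]$, and then needs a separate lemma (Lemma~5) to identify which of these are empty and which equal the full ancestor set — a step that implicitly works with the ``complete'' CM DAG containing all possible $m$-ancestors. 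You instead isolate a clean abstract lemma — a monotone supermodular set function with $f(\emptyset)=0$ evaluated on sets covering each element at most $c$ times satisfies $\sum_\ell f(B_\ell)\le c\,f(G)$ — proved by uncrossing with the $\sum_\ell|B_\ell|^2$ potential. This buys two things: the argument applies verbatim to an arbitrary collection of ancestor sets without any completeness normalization, and the information-theoretic content (supermodularity of $I(V_j:U_{\cdot})$ over independent sources) is cleanly separated from the purely combinatorial content. Your closing remark is also well taken: the naive per-element telescoping bound fails for parity-type dependencies, so the uncrossing (or an equivalent LP argument) is really needed, which is precisely why the paper's Lemma~2 is the load-bearing step there as well.
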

We present the proof for the case $m=2$ while the general proof can be found in the supplemental material.

\begin{lemma}
\label{lemma1}
In the setting of Thm.~\ref{thm:ineq_mn} for $m=2$:
\begin{equation}
 \sum_{i=2}^{n}H(V_{j}U_{ji})\geq
 (N-2)H(V_{j})+H(V_{j}\bigcup_{i=2}^{N}U_{ji}).
\end{equation}
\end{lemma}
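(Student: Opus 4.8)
\emph{Proof plan.} The plan is to prove Lemma~\ref{lemma1} from the submodularity axiom in \eqref{shannonineqs_basic} alone; the causal structure of the CM DAG plays no role in the Lemma itself and enters only in the passage from the Lemma to the $m=2$ case of Thm.~\ref{thm:ineq_mn}. To lighten notation, fix $j$, write $S=\{V_j\}$, and relabel the shared ancestors as $W_i:=U_{ji}$ for $i=2,\dots,N$; these are pairwise distinct random variables precisely because in an $m=2$ CM DAG each unordered pair of observables owns a separate latent ancestor. The inequality to establish becomes
\begin{equation*}
 \sum_{i=2}^{N}H(S\,W_i)\;\ge\;(N-2)\,H(S)+H\Big(S\,\textstyle\bigcup_{i=2}^{N}W_i\Big).
\end{equation*}

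First I would record that submodularity is equivalent to ``conditioning does not increase entropy'': for any variable $A$ and sets $S,T$, $H(A\mid S\,T)\le H(A\mid S)$, i.e.\ $H(S\,T\,A)-H(S\,T)\le H(S\,A)-H(S)$. The main step is then a telescoping chain rule. Writing $H(S\,W_2\cdots W_N)-H(S)=\sum_{i=2}^{N}\big[H(S\,W_2\cdots W_i)-H(S\,W_2\cdots W_{i-1})\big]=\sum_{i=2}^{N}H(W_i\mid S\,W_2\cdots W_{i-1})$ and bounding each term by $H(W_i\mid S\,W_2\cdots W_{i-1})\le H(W_i\mid S)=H(S\,W_i)-H(S)$, one gets $H(S\,W_2\cdots W_N)-H(S)\le\sum_{i=2}^{N}\big(H(S\,W_i)-H(S)\big)$, which rearranges to exactly the displayed bound (the coefficient $(N-1)-1=N-2$ appearing because the sum has $N-1$ terms). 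Equivalently one can argue by induction on $N$: the base case is a single instance of submodularity with $S=\{V_j\}$ and the two added variables $W_2,W_3$, and the inductive step applies submodularity to $\alpha=\{V_j\}\cup\{W_2,\dots,W_N\}$ and $\beta=\{V_j,W_{N+1}\}$, whose intersection is $\{V_j\}$. Finally I would translate back, $W_i\mapsto U_{ji}$, to recover the statement as written.

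There is no deep obstacle in the Lemma itself; the only care needed is the index bookkeeping --- keeping the running unions and their telescoping cancellations straight --- and making explicit that the $U_{ji}$ occurring in the sum are genuinely distinct variables, which is where the $m=2$ hypothesis is used. The actual work, which I would defer to the step from the Lemma to Thm.~\ref{thm:ineq_mn}, is to combine this bound with (i) the $d$-separation relations $I(V_i:V_j\mid U_{ji})=0$, (ii) mutual independence of the latent ancestors, and (iii) the fact that $V_j$ is a function of its latent parents $U_{j2},\dots,U_{jN}$ (and private noise), in order to turn $\sum_{i\neq j}I(V_i:V_j)$ into a quantity controlled by $\sum_i H(V_jU_{ji})$ and hence by $(m-1)H(V_j)$.
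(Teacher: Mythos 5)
Your proposal is correct and matches the paper's proof: the inductive formulation you sketch (submodularity applied to $\{V_j\}\cup\{W_2,\dots,W_N\}$ and $\{V_j,W_{N+1}\}$, intersecting in $\{V_j\}$) is exactly the argument given in the paper, and your telescoping chain-rule version is just that induction unrolled. The only cosmetic difference is that the paper starts the induction at $N=2$, where the inequality holds with equality.
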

\begin{proof}(By induction)
We treat the case $j=1$ w.l.o.g.
For $n=2$ equality holds trivially. Now assuming the validity of the inequality for any $n$:
\begin{eqnarray}
\label{eq2.1}&\sum_{i=2}^{n+1}H(V_{1}U_{1i})\geq (n-2)H(V_{1}) &\\ \nonumber
&+H(V_{1}\bigcup_{i=2}^{n}U_{1i}) +H(V_{1}U_{1(n+1)})&\\
\label{eq2.2}& \geq [(n+1)-2]H(V_{1})+H(V_{1}\bigcup_{i=2}^{n+1}U_{1i}).&
\end{eqnarray}
From (\ref{eq2.1}) to (\ref{eq2.2}) we have used sub-modularity.
\end{proof}

\begin{proof}[Proof of Theorem~\ref{thm:ineq_mn}.]
Apply the data processing inequality to the left-hand side of
\eqref{ineq_mn} to obtain
\begin{eqnarray*}
   &\sum_{i=2}^{n}I(A_{1}:A_{i})\leq \sum_{i=2}^{n}I(A_{1}:U_{1i})&\\ \nonumber
   &=(n-1)H(A_{1})+\sum_{i=2}^{n}H(\lambda_{1i})-\sum_{i=2}^{n}H(A_{1}\lambda_{1i}).&
\end{eqnarray*}

With Lemma~\ref{lemma1}, we get
\begin{eqnarray*}
   &\sum_{i=2}^{n}I(V_{1}:V_{i})\leq(n-1)H(V_{1})+\sum_{i=2}^{n}H(U_{1i})&\\ \nonumber
   &-[(n-2)H(V_{1})+H(V_{1}\bigcup_{i=2}^{n}U_{1i})]&.
\end{eqnarray*}
 The mutual independence of hidden variables yields
 $\sum_{i=2}^{n}H(U_{1i})=H(\bigcup_{i=2}^{n}U_{1i})$ implying that
  \begin{equation*}
		\sum_{i=2}^{n}I(V_{1}:V_{i})\leq
		H(V_{1})-H(V_{1}|\bigcup_{i=2}^{n}U_{1i})\leq H(V_{1}).
 \end{equation*}
\end{proof}

We highlight the fact that Ineq.~\eqref{ineq_mn} involves only
pairwise distributions -- the discussion in
Sec.~\ref{sec:bad_statistics}
applies.
Following our approach, one can derive further entropic inequalities, in
particular involving the joint entropy of all observed variables. A more complete theory will be presented elsewhere.

\subsection{Quantifying causal influences}
\label{sec:quant_causal}

Unlike conditional independence, mutual information captures
dependencies in a quantitative way. In this section, we
show that our framework allows one to derive non-trivial
bounds on the strength of causal links. We then go on to present two
corollaries of this result: First, it follows that the degree of
violation of an entropic inequality often carries an operational
meaning. Second, under some assumptions, the finding will allow us to
introduce a novel way of distinguishing dependence created through
common ancestors from direct causal influence.

Various measures of causal influence have been studied in the
literature. Of particular interest to us is the one recently
introduced in \cite{Janzing2013}. The main idea is that the causal
strength $\mathcal{C}_{X \rightarrow Y}$ between a variable $X$ on
another variable $Y$ should measure the impact of an intervention that
removes the arrow between them. Ref.~\cite{Janzing2013}
draws up a list of reasonable postulates that a measure of causal
strength should fulfill. Of special relevance to our
information-theoretic framework is the axiom stating that
\begin{equation}
	\mathcal{C}_{X \rightarrow Y} \geq I(X:Y \vert \mathrm{PA}_{Y}^{X}),
	\label{causal_influence}
\end{equation}
where $\mathrm{PA}_{Y}^{X}$ stands for the parents of variable
$Y$ other than $X$. We focus on this property, as the quantity $I(X:Y \vert
\mathrm{PA}_{Y}^{X})$ appears naturally in our description and thus allows
us to bound any measure of causal strength $\mathcal{C}_{X \rightarrow
Y}$ for which (\ref{causal_influence}) is valid.

To see how this works in practice, we start by augmenting the common
ancestor scenario considered in the previous section. Assume that now
we do allow for direct causal influence between two variables, in
addition to pairwise common ancestors -- c.f.\ Fig.~\ref{fig:triangle}
(c).
Then (\ref{causal_influence}) becomes $\mathcal{C}_{V_1 \rightarrow V_2} \geq I(V_1:V_2 \vert U_{12},U_{13})$.
We thus re-run our algorithm, this time with the unobservable quantity
$I(V_1:V_2 \vert U_{12},U_{13})$ included in the marginal scenario.
The result is
\begin{equation}
	I(V_1:V_2 \vert U_{12},U_{13})
	\geq
	I(V_1:V_2)+I(V_1:V_3)-H(V_1),
\label{causal_influence_triangle}
\end{equation}
which lower-bounds the causal strength in terms of observable entropies.

The same method yields a particularly concise and relevant result when
applied to the instrumental test of Sec.~\ref{subsec:instrumental}.
The instrumental DAG may stand, for example, for a clinical study
about the efficacy of some drug where $Z$ would label the treatment
assigned, $X$ the treatment received, $Y$ the observed response and
$U$ for any observed or unobserved factors affecting $X$ and $Y$. In
this case we would be interested not only in checking the compatibility
with the presumed causal relations but also the direct causal
influence of the drug on the expected observed response, that is,
$\mathcal{C}_{X \rightarrow Y}$. After the proper marginalization we
conclude that $\mathcal{C}_{X \rightarrow Y} \geq I(Y:Z)$,
a strikingly simple, but non-trivial bound that can be computed from
the observed quantities alone. Likewise,
if one allows the instrumental DAG to
have an arrow connecting $Z$ and $Y$, one
finds
\begin{equation}\label{eqn:instrumental_quant}
	\mathcal{C}_{Z \rightarrow Y} \geq I(Y:Z\vert X)+I(X:Z)- H(X).
\end{equation}

The findings presented here can be re-interpreted in two ways:

First, note that the right hand side of the lower bound
(\ref{causal_influence_triangle}) is nothing but Ineq. \eqref{triangle_1}, a constraint
on distributions compatible with DAG \ref{fig:bad_stats} (b).
Similarly, the r.h.s.\ of (\ref{eqn:instrumental_quant}) is just the
degree of violation of the entropic instrumental inequality
(\ref{instrumental_entropic}).

We thus arrive at the conceptually important realization that the
entropic conditions proposed here offer more than just binary tests.
To the contrary, their degree of violation is seen to carry a
quantitative meaning in terms of strengths of causal influence.

Second, one can interpret the results of this sections as providing a
novel way to distinguish between DAGs (a) and (c) in
Fig.~\ref{fig:triangle} without experimental data.
Assume that we have some information about the physical process that
could facilitate direct causal influence from $V_1$ to $V_2$ in (c),
and that we can use that prior information to put a quantitative upper
bound on $\mathcal{C}_{V_1 \to V_2}$. Then
we must
reject the direct
causation model (c) in favor of a common ancestor explanation (a), as
soon as the observed dependencies violate the bound \eqref{causal_influence_triangle}.
As an illustration, the perfect correlations exhibited by the
distribution
\eqref{perf_corr}
is incompatible with DAG (c), as long as
$\mathcal{C}_{V_1 \rightarrow V_2}$ is known to be smaller than $1$.

\section{Statistical Tests}

In this section, we briefly make the point that inequality-based
criteria immediately suggest test statistics which can be used for
testing hypotheses about causal structures. While a thorough treatment of
statistical issues is the subject of ongoing research \cite{bartolucci2000likelihood,ramsahai2011likelihood}, it should
become plain that the framework allows to derive non-trivial tests in
a simple way.

Consider an inequality	$I := \sum_{S\subset 2^{[n]}} c_S H(S) \leq 0$ for suitable coefficients $c_S$.
Natural candidates for test statistics derived from it would be
$T_I := \sum_S c_S \hat H(S)$ or $T_I':=\frac{T_I}{\sqrt{\hat{\operatorname{var}}(T_I)}}$,
where $\hat H(S)$ is the entropy of the empirical distribution of
$X_S$, and $\hat{\operatorname{var}}$ is some consistent estimator of
variance (e.g.\ a bootstrap estimator).
If the inequality $I$ is fulfilled for some DAG $G$, then a test with
null hypothesis ``data is compatible with $G$'' can be designed by
testing $T_I\leq t$ or $T_I'\leq t$, for some critical value $t>0$.
In an asymptotic regime, there could be reasonable hope to
analytically characterize the distribution of $T_I'$. However, in the
more relevant small sample regime, one will probably have to resort
to Monte Carlo simulations in order to determine $t$ for a desired
confidence level. In that case, we prefer to use $T_I$, by virtue of
being ``less non-linear'' in the data.

We have performed a preliminary numerical study using the DAG given in
Fig.~\ref{fig:triangle} (b) together with Ineq.~(\ref{triangle_1}).
We have simulated experiments that draw $50$ samples from various
distributions of three binary random variables $V_1, V_2, V_3$ and
compute the test statistic $T_I$. To test at the 5\%-level, we must
choose $t$ large enough such that for all distributions $p$
compatible with \ref{fig:triangle}(b), we have a type-I error rate
$\operatorname{Pr}_p[T_I > t]$ below $5\%$.
We have employed the following heuristics for finding $t$: (1) It is
plausible that the highest type-I error rate occurs for distributions
$p$ that
reach equality $\mathbb{E}_p[\hat I]=0$;
(2) This occurs only if $V_1$ is a deterministic function of $V_2$ and
$V_3$. From there, it follows that $V_1$ must be a function of one of
$V_2$ or $V_3$ and we have used a Monte Carlo
simulation with
$(V_2, V_3)$ uniformly random and $V_1=V_2$ to find $t=.0578$.
Numerical checks failed to identify distributions with higher type-I
rate (though we have no proof).  Fig.~\ref{fig:power} illustrates the
resulting test.

\begin{figure}[t]
\vspace{0.8cm}
\center
\includegraphics[width=8cm]{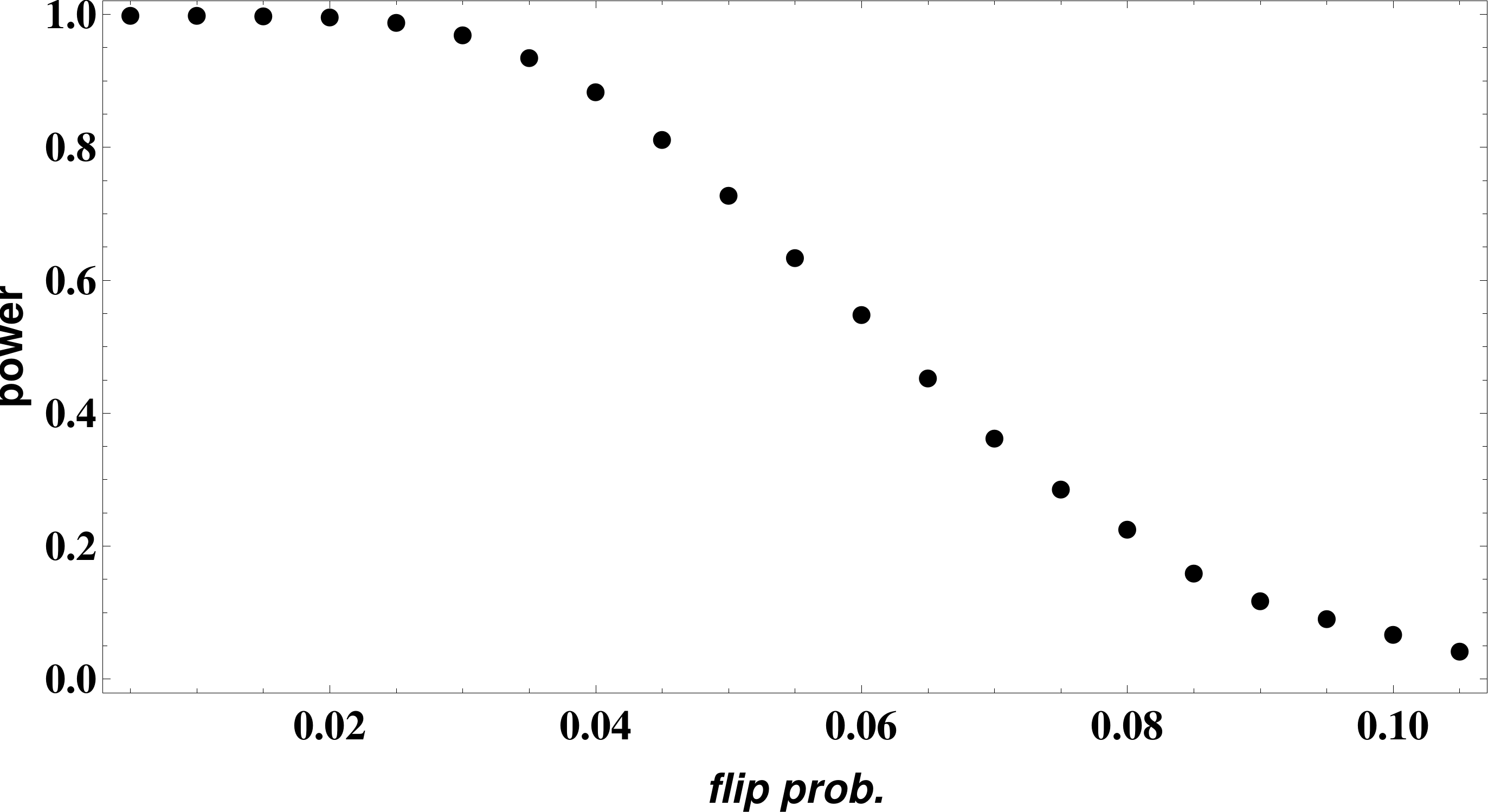}
\caption{
	Power ($1$ minus type-II error) of the test $T_I\geq t$ for the DAG
	Fig.~\ref{fig:triangle}(b) derived from Ineq.~(\ref{triangle_1})
	using $50$ samples.
	The test was run on a distribution obtained by starting with three
	perfectly correlated binary random variables as in (\ref{perf_corr})
	and then inverting each of the variables independently with a given
	``flip probability'' ($x$ axis).
	Every data point is the result of 10000 Monte Carlo simulations.
}
\label{fig:power}
\end{figure}

\section{Conclusions}
Hidden variables imply nontrivial constraints on observable
distributions. While we cannot give a complete characterization of
these constraints, we show that a number of nontrivial constraints can
be elegantly formulated in terms of entropies of subsets of
variables. These constraints are linear (in)equalities, which lend
themselves well to algorithmic implementation.

Remarkably, our approach only requires the polymatroidal axioms, and
thus also applies to various information measures other than Shannon
entropy. Some of these may well be relevant to causal inference and
structure learning and may constitute an interesting topic for future
research.

\section*{Acknowledgements}

We acknowledge support by the Excellence Initiative of the German Federal and State Governments (Grant ZUK 43), the Research Innovation Fund from the University of Freiburg and the Brazilian research
agency CNPq. DG's research is supported by the US Army Research Office under contracts W911NF-14-1-0098 and W911NF-14-1-0133 (Quantum Characterization, Verification, and Validation).

\bibliographystyle{plain}
\bibliography{causal}

\pagebreak
\begin{widetext}
\section{Supplemental Material}
In this supplemental material we prove, for any $n$ and $m$, the validity of the inequality (10) of the main text,
where $n$ is the number of observables and $m$ the maximal number of observables that are connected by one latent ancestors.

Our proof of \textbf{Theorem 1} (inequality (10) of the main text) for general $n$ and $m$ proceeds as follows. We start with Lemma \ref{lemma2}. After some definitions we introduce Lemma \ref{lemma4} which leads to Corollary \ref{corollary1}. This corollary is a statement on how to bound the sum of conditional entropies by another sum of conditional entropies, where the sets over which we condition on are rearranged. Lemma \ref{lemma5} determines which of these sets are empty for CM DAGs with fixed $m$. Finally we connect these results and prove the general inequality.

\begin{lemma}
\label{lemma2}
 For any set of observables $S=({V_{i}\cup V_{j}\cup...})$ and any two (not necessarily) disjoint sets $B_{1},B_{2}$ composed of independent latent ancestors, the following inequality holds
\begin{equation}
 H(S|B_{1})+H(S|B_{2})\geq H(S|B_{1} \cap B_{2}) + H(S|B_{1} \cup B_{2}).
\end{equation}
\end{lemma}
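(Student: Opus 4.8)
The plan is to reduce the claimed inequality to ordinary submodularity of the Shannon entropy (the second of the polymatroidal axioms in \eqref{shannonineqs_basic}), using the independence of the latent ancestors only to control the conditioning terms. First I would rewrite every conditional entropy via the chain rule $H(S\mid B)=H(S\cup B)-H(B)$, which is itself a consequence of the polymatroidal axioms. After this substitution the desired inequality becomes equivalent to
\[
\big[H(S\cup B_1)+H(S\cup B_2)-H(S\cup B_1\cup B_2)-H(S\cup(B_1\cap B_2))\big]\ \geq\ \big[H(B_1)+H(B_2)-H(B_1\cup B_2)-H(B_1\cap B_2)\big].
\]

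For the left-hand bracket I would apply submodularity to the two sets $A_1:=S\cup B_1$ and $A_2:=S\cup B_2$. The key observation is the pair of set identities $A_1\cup A_2=S\cup B_1\cup B_2$ and $A_1\cap A_2=S\cup(B_1\cap B_2)$, valid for arbitrary (not necessarily disjoint) $B_1,B_2$; submodularity $H(A_1)+H(A_2)\geq H(A_1\cup A_2)+H(A_1\cap A_2)$ then says precisely that the left-hand bracket is nonnegative.

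It remains to show the right-hand bracket vanishes, and this is where the hypothesis enters: $B_1$, $B_2$, and hence $B_1\cup B_2$, consist of mutually independent latent ancestors, so for any such set $B$ one has $H(B)=\sum_{U\in B}H(U)$, i.e.\ entropy is \emph{modular} on subsets of the latent ancestors. Inclusion--exclusion for the counting measure then gives $H(B_1)+H(B_2)=H(B_1\cup B_2)+H(B_1\cap B_2)$, so the right-hand bracket is identically zero and the inequality follows. The degenerate case $B_1\cap B_2=\emptyset$ is covered by the same computation, using $H(S\mid\emptyset)=H(S)$ and $H(\emptyset)=0$.

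The one place a careless argument goes wrong — and thus the main point to be careful about — is that one cannot bound the right-hand bracket using submodularity a second time, since that would produce the wrong sign; independence is genuinely needed so that those four terms cancel \emph{exactly}. Everything else used (the chain rule and submodularity) is already available from the polymatroidal axioms, so no probabilistic input is required beyond the stated mutual independence of the hidden ancestors.
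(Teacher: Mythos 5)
Your proof is correct and follows essentially the same route as the paper's: both convert the conditional entropies via the chain rule, apply submodularity to the sets $S\cup B_1$ and $S\cup B_2$ (using the identities $(S\cup B_1)\cap(S\cup B_2)=S\cup(B_1\cap B_2)$ and $(S\cup B_1)\cup(S\cup B_2)=S\cup B_1\cup B_2$), and invoke the mutual independence of the latent ancestors to get the exact cancellation $H(B_1)+H(B_2)=H(B_1\cap B_2)+H(B_1\cup B_2)$. Your explicit remark that independence is needed for equality (rather than a second, wrongly-signed use of submodularity) is a point the paper leaves implicit, but the argument is the same.
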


\emph{Proof.}
\begin{eqnarray}
H(S|B_{1})+H(S|B_{2})&=& H(SB_{1})+H(SB_{2})- H(B_{1})-H(B_{2})\\ \nonumber
&\geq& H(S(B_{1} \cap B_{2})) + H(S(B_{1} \cup B_{2}))- H(B_{1})-H(B_{2}).
\end{eqnarray}
Since all latent ancestors are pairwise independent, we have $H(B_{1})+H(B_{2})= H(B_{1} \cap B_{2}) + H(B_{1} \cup B_{2})$ and with this
 \begin{eqnarray}
 &&H(S|B_{1})+H(S|B_{2})\\ \nonumber
 &\geq& H(S(B_{1} \cap B_{2})) + H(S(B_{1} \cup B_{2}))-H(B_{1} \cap B_{2}) - H(B_{1} \cup B_{2})\\ \nonumber
 &=&H(S|B_{1} \cap B_{2}) + H(S| B_{1} \cup B_{2}). \text{  } \square
 \end{eqnarray}

After the following definition we can introduce the next lemma.
\begin{definition}
\label{om}
The latent ancestor connecting the observable variables $V_i$, $V_j$, $V_k$ etc. is labeled $U_{ijk...}$.
We define $A_i$ to be the union of all latent ancestors that connect $V_1$ and $V_i$.
For the case $n=4, m=3$, for example $A_2=\{U_{123},U_{124}\}$
For any scenario with arbitrary, fixed $m$ and $n$, we define
 \begin{equation}
\Omega^{n'}:=\bigcup_{i=2}^{n'} A_{i}\text{   and    } s_{i}^{n'}:=\bigcup_{\substack{j=2\\j\neq i}}^{n'} A_{j},
\end{equation}
where $n'$ is any integer with $n'\leq n$. Additional indices $n$ and $m$, that define the given scenario, are omitted.
\end{definition}
More explicitly, $\Omega^{n'}$ is the union of all sets of latent ancestors up to $n'$; and  $s_{i}^{n'}$ respectively with leaving out $A_{i}$.
To make the definitions clear, we give an explicit example for $n=5$, $m=3$:
\begin{eqnarray}
\Omega^{3}&=&\bigcup_{i=2}^{3} A_{i}\\ \nonumber
&=&A_{2}\cup A_{3}\\ \nonumber
&=&(\lambda_{123}\cup U_{124}\cup U_{125})\cup ( U_{123}\cup U_{134}\cup\ U_{135})\\ \nonumber
&=&U_{123}\cup U_{124}\cup U_{125} \cup U_{134}\cup U_{135}\\
s_{3}^{4}&=&\bigcup_{\substack{j=2\\j\neq 3}}^4 A_{j}=A_{2}\cup A_{4}=...\text{ }.
\end{eqnarray}

 \begin{lemma}
 \label{lemma4}
  With the above definitions the following inequality holds for every $k$ and $n$ with $k\leq n$
  \begin{eqnarray}
  H(V_1|\bigcap_{\substack{s\subseteq [n]\backslash\{1\}\\
  |s|=k-1}}[\cup_{i\epsilon s}A_{i}])
   &+& H(V_1|\bigcap_{\substack{s\subseteq [n]\backslash\{1\}\\
   |s|=k-2}}[\cup_{i\epsilon s}A_{i}\cup A_{n+1}])\\ \nonumber
   \geq H(V_1|\bigcap_{\substack{s\subseteq [n+1]\backslash\{1\}\\ \nonumber
   |s|=k-1}}[\cup_{i\epsilon s}A_{i}])
    &+& H(V_1|\bigcap_{\substack{s\subseteq [n]\backslash\{1\}\\ \nonumber
    |s|=k-2+1}}[\cup_{i\epsilon s}A_{i}\cup A_{n+1}]).
  \end{eqnarray}
 \end{lemma}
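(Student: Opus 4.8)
The plan is to read Lemma~\ref{lemma4} as a single instance of the submodularity bound of Lemma~\ref{lemma2}, applied to the one-element observable set $S=\{V_1\}$. I would introduce the abbreviations
\[
B_1:=\bigcap_{\substack{s\subseteq[n]\backslash\{1\}\\ |s|=k-1}}\Bigl[\bigcup_{i\in s}A_i\Bigr],
\qquad
B_2:=\bigcap_{\substack{s\subseteq[n]\backslash\{1\}\\ |s|=k-2}}\Bigl[\bigcup_{i\in s}A_i\cup A_{n+1}\Bigr],
\]
so that the left-hand side of Lemma~\ref{lemma4} is exactly $H(V_1|B_1)+H(V_1|B_2)$. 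Since $B_1$ and $B_2$ are built from the sets $A_i$ of latent ancestors by unions and intersections, each is again a set of latent ancestors, and all latent ancestors are mutually independent in a CM DAG; hence the hypotheses of Lemma~\ref{lemma2} are met and it yields
\[
H(V_1|B_1)+H(V_1|B_2)\ \geq\ H(V_1|B_1\cap B_2)+H(V_1|B_1\cup B_2).
\]
It then suffices to identify $B_1\cap B_2$ and $B_1\cup B_2$ with the two conditioning sets appearing on the right-hand side of Lemma~\ref{lemma4}.

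For the intersection, I would partition the $(k-1)$-subsets $s$ of $[n+1]\backslash\{1\}=\{2,\dots,n+1\}$ according to whether or not they contain the index $n+1$. The subsets with $n+1\notin s$ are precisely the $(k-1)$-subsets of $\{2,\dots,n\}$, and intersecting the corresponding sets $\bigcup_{i\in s}A_i$ gives $B_1$. The subsets with $n+1\in s$ are exactly the sets $\{n+1\}\cup t$ with $t$ a $(k-2)$-subset of $\{2,\dots,n\}$, for which $\bigcup_{i\in s}A_i=A_{n+1}\cup\bigcup_{i\in t}A_i$, so intersecting over them gives $B_2$. Therefore $B_1\cap B_2=\bigcap_{\substack{s\subseteq[n+1]\backslash\{1\},\,|s|=k-1}}\bigl[\bigcup_{i\in s}A_i\bigr]$, which is the first conditioning set on the right-hand side.

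For the union, I would use that set union distributes over arbitrary intersections, giving $B_1\cup B_2=\bigcap_{s}\bigcap_{t}\bigl[A_{n+1}\cup\bigcup_{i\in s\cup t}A_i\bigr]$, where $s$ ranges over $(k-1)$-subsets and $t$ over $(k-2)$-subsets of $\{2,\dots,n\}$. For a fixed $s$, choosing a $(k-2)$-subset $t\subseteq s$ makes $s\cup t=s$, so the term $A_{n+1}\cup\bigcup_{i\in s}A_i$ occurs in the family, and since $s\subseteq s\cup t$ it is contained in every term $A_{n+1}\cup\bigcup_{i\in s\cup t}A_i$ with the same $s$; discarding the non-minimal terms leaves $B_1\cup B_2=\bigcap_{\substack{s\subseteq[n]\backslash\{1\},\,|s|=k-1}}\bigl[\bigcup_{i\in s}A_i\cup A_{n+1}\bigr]$, which is precisely the second conditioning set on the right-hand side. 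Substituting these two identities into the bound from Lemma~\ref{lemma2} completes the argument.

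The only genuinely delicate point is the pruning step for the union, where one must check that every term in the distributed intersection is a superset of some term of the reduced family; this follows at once from $s\subseteq s\cup t$ together with $|s|=k-1$, but it deserves to be spelled out carefully. Everything else is bookkeeping driven by the ``$n+1$ in or out'' dichotomy. I would also add a sentence on the boundary range of $k$: for $2\le k\le n$ the argument runs verbatim, while the extreme values of $k$ either collapse the families to trivial cases or fall outside the range in which the lemma is subsequently invoked.
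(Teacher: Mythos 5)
Your proposal is correct and follows essentially the same route as the paper's own proof: apply Lemma~\ref{lemma2} with $S=\{V_1\}$ to the two conditioning sets and then identify their intersection and union with the right-hand-side conditioning sets via the ``$n+1$ in or out'' dichotomy. The only cosmetic difference is in the union computation, where you distribute over both intersections and prune non-minimal terms while the paper factors out $A_{n+1}$ and uses that the $(k-2)$-intersection is contained in the $(k-1)$-intersection; both arguments are valid.
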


 Note that this lemma is not only valid when referring to all $n$ variables but also when replacing $n$ by an integer $n'\leq n$.

 \begin{proof}
 According to Lemma (\ref{lemma2}) we have
  \begin{eqnarray}
 H(V_1|Z)+H(V_1|Y)&\geq& H(V_1|Z\cup Y)+H(V_1|Z\cap Y)\text{ with}\\ \nonumber
   Z&=& \bigcap_{\substack{s\subseteq [n]\backslash\{1\}\\ \nonumber
   |s|=k-1}}[\cup_{i\epsilon s}A_{i}]\text{ and }\\ \nonumber
   Y&=&\bigcap_{\substack{s\subseteq [n]\backslash\{1\}\\ \nonumber
   |s|=k-2}}[\cup_{i\epsilon s}A_{i}\cup A_{n+1}].
  \end{eqnarray}
We calculate the intersection $Z\cap Y$ to be given by
 \begin{eqnarray}
   &&Z\cap Y\\
   \label{eq4}&=& \left(\bigcap_{\substack{s\subseteq [n]\backslash\{1\}\\ \nonumber
   |s|=k-1}}[\cup_{i\epsilon s}A_{i}]\right)\cap\left(
   \bigcap_{\substack{s\subseteq [n]\backslash\{1\}\\ \nonumber
   |s|=k-2}}[\cup_{i\epsilon s}A_{i}\cup A_{n+1}]\right)\\ \nonumber
   &=&\bigcap_{\substack{s\subseteq [n+1]\backslash\{1\}\\|s|=k-1}}[\cup_{i\epsilon s}A_{i}],
\end{eqnarray}
because both sets -- $Z$ and $Y$ -- in (\ref{eq4}) are the intersection of unions of $k-1$ different $A_{i}$, where the $i$ are element of $[n+1]\backslash\{1\}$.
The difference is that $Z$ contains only those unions where $A_{n+1}$ does not appear, $Y$ only those where it does. Subsumed we have the intersection of the
unions of all $k-1$ possible $A_{i}$.

The union can be written as
 \begin{eqnarray}
   &&Z\cup Y\\
   \label{eq5}&=& \left(\bigcap_{\substack{s\subseteq [n]\backslash\{1\}\\ \nonumber
   |s|=k-1}}[\cup_{i\epsilon s}A_{i}]\right)\cup\left(
   \bigcap_{\substack{s\subseteq [n]\backslash\{1\}\\ \nonumber
   |s|=k-2}}[\cup_{i\epsilon s}A_{i}\cup A_{n+1}]\right)\\ \nonumber
   \label{eq5}&=& \left(\bigcap_{\substack{s\subseteq [n]\backslash\{1\}\\ \nonumber
   |s|=k-1}}[\cup_{i\epsilon s}A_{i}]\right)\cup\left(
\bigcap_{\substack{s\subseteq [n]\backslash\{1\}\\ \nonumber
|s|=k-2}}[\cup_{i\epsilon s}A_{i}]\right)\cup A_{n+1}\\ \nonumber
&=& \left(\bigcap_{\substack{s\subseteq [n]\backslash\{1\}\\ \nonumber
|s|=k-1}}[\cup_{i\epsilon s}A_{i}]\right)\cup A_{n+1}\\ \nonumber
&=&\bigcap_{\substack{s\subseteq [n]\backslash\{1\}\\ \nonumber
|s|=k-2+1}}[\cup_{i\epsilon s}A_{i}\cup A_{n+1}],\\ \nonumber
\end{eqnarray}
which concludes the proof.
\end{proof}

\begin{corollary}
 \label{corollary1}
 For every $n'\leq n$ the following inequality is valid
  \begin{equation}
   \sum_{i=2}^{n'}H(V_1|A_{i})\geq\sum_{k=2}^{n'}H(V_1|\bigcap_{\substack{s\subseteq [n']\backslash\{1\}\\
   |s|=k-1}}[\cup_{j\epsilon s}A_{j}]).
  \end{equation}
\end{corollary}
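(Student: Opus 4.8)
The plan is to prove Corollary~\ref{corollary1} by induction on $n'$, using Lemma~\ref{lemma4} as the engine for the inductive step. For the base case $n'=2$, the left-hand side is $H(V_1|A_2)$ and the right-hand side is the single term $k=2$, namely $H(V_1|\bigcap_{|s|=1}[\cup_{i\in s}A_i]) = H(V_1|A_2)$, so equality holds trivially.

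For the inductive step, I would assume the inequality holds for $n'$ and add the term $H(V_1|A_{n'+1})$ to both sides. The left-hand side then becomes $\sum_{i=2}^{n'+1}H(V_1|A_i)$ as desired, so the task reduces to showing
\begin{equation*}
\sum_{k=2}^{n'}H(V_1|\bigcap_{\substack{s\subseteq [n']\backslash\{1\}\\ |s|=k-1}}[\cup_{j\in s}A_{j}]) + H(V_1|A_{n'+1}) \geq \sum_{k=2}^{n'+1}H(V_1|\bigcap_{\substack{s\subseteq [n'+1]\backslash\{1\}\\ |s|=k-1}}[\cup_{j\in s}A_{j}]).
\end{equation*}
The idea is to transform the left-hand sum term by term, starting from the $k=2$ term, by repeatedly invoking Lemma~\ref{lemma4}. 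Concretely, I would apply Lemma~\ref{lemma4} with parameter $k=2$ to the pair consisting of the $k=2$ term of the $n'$-sum (which involves only subsets $s\subseteq[n']\setminus\{1\}$) and a term of the form $H(V_1|\bigcap_{|s|=0}[\cup A_i\cup A_{n'+1}]) = H(V_1|A_{n'+1})$; Lemma~\ref{lemma4} replaces this pair by the $k=2$ term over $[n'+1]\setminus\{1\}$ plus a residual term $H(V_1|\bigcap_{|s|=1}[\cup_{i\in s}A_i\cup A_{n'+1}])$. That residual term then pairs with the $k=3$ term of the $n'$-sum, and a second application of Lemma~\ref{lemma4} (now with $k=3$) produces the $k=3$ term over $[n'+1]\setminus\{1\}$ plus a new residual, and so on. This telescoping cascade, run from $k=2$ up to $k=n'$, converts all terms and at the last stage leaves exactly the $k=n'+1$ term over $[n'+1]\setminus\{1\}$, since $\bigcap_{\substack{s\subseteq[n']\setminus\{1\},|s|=n'-1}}[\cup_{i\in s}A_i\cup A_{n'+1}]$ becomes the full intersection involving $A_{n'+1}$, matching the $|s|=n'$ subset of $[n'+1]\setminus\{1\}$.

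The main obstacle I anticipate is bookkeeping: one must verify carefully that each intermediate residual term produced by Lemma~\ref{lemma4} is precisely the term needed as input to the next application — that is, that the index shifts ($|s|=k-1$ versus $|s|=k-2$, over $[n']$ versus $[n'+1]$, with or without the adjoined $A_{n'+1}$) line up exactly as the lemma requires. The set-theoretic identities in Lemma~\ref{lemma4}'s proof (the intersection formula~\eqref{eq4} and the union formula~\eqref{eq5}) already do the heavy lifting here, so the risk is purely one of mislabeling indices rather than of a genuine gap. I would present the cascade as an explicit chain of inequalities for a fixed small $n'$ to fix notation, then note that the general pattern follows by the stated monotone matching of terms; alternatively one can phrase the whole argument as a double induction (on $n'$ outwardly, on $k$ for the internal cascade), but I expect the single telescoping display to be cleaner to read.
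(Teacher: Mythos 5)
Your proposal is correct and takes essentially the same route as the paper's own proof: induction on $n'$, rewriting $H(V_1|A_{n'+1})$ as the $|s|=0$ term $H(V_1|\bigcap_{|s|=0}[\cup_{i\in s}A_i\cup A_{n'+1}])$, and then running the telescoping cascade of applications of Lemma~\ref{lemma4} from $k=2$ up to $k=n'$ to convert each term of the $n'$-sum into the corresponding term over $[n'+1]\setminus\{1\}$. The bookkeeping concern you raise is exactly the part the paper also handles only by pointing at the pattern, so there is no substantive difference.
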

\begin{proof}(By induction)

 For $n'=2$ we have equality. Now we have to show that
 \begin{equation}
 \label{desired}\sum_{i=2}^{n'+1}H(V_1|A_{i})\geq\sum_{k=2}^{n'+1}H(V_1|\bigcap_{\substack{s\subseteq [n'+1]\backslash\{1\}\\
 |s|=k-1}}[\cup_{j\epsilon s}A_{j}]).
 \end{equation}
 Assuming validity of Corollary \ref{corollary1} for $n'$, we get
  \begin{eqnarray}
   &&\sum_{i=2}^{n'+1}H(V_1|A_{i})\\
   &\geq&\sum_{k=2}^{n'}H(V_1|\bigcap_{\substack{s\subseteq [n']\backslash\{1\}\\ \nonumber
   |s|=k-1}}[\cup_{j\epsilon s}A_{j}])+H(V_1|A_{n'+1})\\
   &=&\sum_{k=2}^{n'}H(V_1|\bigcap_{\substack{s\subseteq [n']\backslash\{1\}\\ \nonumber
   |s|=k-1}}[\cup_{j\epsilon s}A_{j}])
   + H(V_1|\bigcap_{\substack{s\subseteq [n']\backslash\{1\}\\ \nonumber
   |s|=0}}[\cup_{i\epsilon s}A_{i}\cup A_{n'+1}])\\
    &=&\sum_{k=3}^{n'}H(V_1|\bigcap_{\substack{s\subseteq [n']\backslash\{1\}\\ \nonumber
    |s|=k-1}}[\cup_{j\epsilon s}A_{j}]) +H(V_1|\bigcap_{\substack{s\subseteq [n']\backslash\{1\}\\
   |s|=1}}[\cup_{j\epsilon s}A_{j}])
   + H(V_1|\bigcap_{\substack{s\subseteq [n']\backslash\{1\}\\ |s|=0}}[\cup_{i\epsilon s}A_{i}\cup A_{n'+1}]).
  \end{eqnarray}
  Now we use Lemma (\ref{lemma4}) to bound the last two terms and get
   \begin{eqnarray}
   &&\sum_{i=2}^{n'+1}H(V_1|A_{i})\\ \nonumber
   &\geq&\sum_{k=3}^{n'}H(V_1|\bigcap_{\substack{s\subseteq [n']\backslash\{1\}\\ \nonumber
   |s|=k-1}}[\cup_{j\epsilon s}A_{j}])+H(V_1|\bigcap_{\substack{s\subseteq [n'+1]\backslash\{1\}\\ \nonumber
    |s|=1}}[\cup_{i\epsilon s}A_{i}])+ H(V_1|\bigcap_{\substack{s\subseteq [n']\backslash\{1\}\\ \nonumber
   |s|=1}}[\cup_{i\epsilon s}A_{i}\cup A_{n'+1}]).
  \end{eqnarray}
  We notice that the second term in RHS is the term $k=2$ of the desired sum in (\ref{desired}). The $k=3$ term of the sum can again be connected to the last term in RHS
  to generate the next term of the desired sum. Repeating this application of Lemma (\ref{lemma4}) we can turn every term of the sum into the desired one.
\end{proof}
Now that we have shown how to rearrange a sum of conditional entropies we examine the sets over which we condition on. We show that some of them can be identified with $\emptyset$
and some with $\Omega^{n}$.
It is the last small step to take, before we can introduce the inequality for general $n$ and $m$.
\begin{lemma}
 \label{lemma5}
   For every $n$ and $m$ with $n\geq m$ and integer $c$ with $c\leq n$, the following holds:
  \begin{eqnarray*}
           \bigcap_{\substack{s\subseteq [n]\backslash\{1\}\\|s|=c}}[\cup_{j\epsilon s}A_{j}]=\begin{cases}
                                                                                               \emptyset &\mbox{if } c \leq n-m \\
                                                                                               \Omega^n &\mbox{if } c > n-m
                                                                                              \end{cases}.
  \end{eqnarray*}
 \end{lemma}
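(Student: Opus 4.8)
The plan is to reduce the set identity to a one-line counting statement about hitting sets. Throughout I work with the canonical form of a CM DAG in which \emph{every} latent ancestor connects exactly $m$ observables; this is no restriction, since (using $n\ge m$) an ancestor touching fewer than $m$ observables may be equipped with extra outgoing edges to observables that simply ignore it, leaving the joint distribution of $V_1,\dots,V_n$ -- and hence the inequality to be proved -- untouched, and it is the convention implicit in Definition~\ref{om}. With it, the latent ancestors carrying the index $1$ are precisely the variables $U_T$ with $1\in T$ and $|T|=m$, and $\Omega^{n}=\bigcup_{i=2}^{n}A_i$ is exactly the collection of all such $U_T$.

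Next I would record the membership rule. Fix a latent ancestor $U_T$ with $1\in T$ and a set $s\subseteq[n]\setminus\{1\}$. Since $U_T\in A_j$ iff $j\in T$, we have $U_T\in\bigcup_{j\in s}A_j$ iff $s$ meets $T\setminus\{1\}$. Hence $U_T$ lies in $\bigcap_{|s|=c}\bigcup_{j\in s}A_j$ (the intersection taken over all $c$-element subsets $s$ of $[n]\setminus\{1\}$) iff every such $s$ meets $T\setminus\{1\}$; equivalently, iff the complement $([n]\setminus\{1\})\setminus(T\setminus\{1\})$, which has $(n-1)-(|T|-1)=n-m$ elements, is too small to contain a $c$-set -- that is, iff $c>n-m$.

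From here the two cases drop out. If $c\le n-m$, no $U_T$ with $1\in T$ survives; since every element of every $\bigcup_{j\in s}A_j$ is such a $U_T$, the intersection is empty. If $c>n-m$, then on one hand every $U_T\in\Omega^{n}$ survives (it has $|T|=m$, so $c>n-m$ suffices), whence $\Omega^{n}\subseteq\bigcap_{|s|=c}\bigcup_{j\in s}A_j$; on the other hand each $\bigcup_{j\in s}A_j$ contains only ancestors with index $1$ and is therefore contained in $\Omega^{n}$, so the intersection is contained in $\Omega^{n}$ as well. Equality follows. (The degenerate value $c=n$, for which there is no $s$, does not arise in the application in Corollary~\ref{corollary1}, where $c=k-1\le n'-1$; the same argument goes through verbatim with $n$ replaced by any $n'\le n$.)

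I do not expect a genuine obstacle: the heart of the matter is the elementary fact that a subset $R$ of an $(n-1)$-element set is met by every $c$-subset iff $|R|>n-1-c$. The one point that needs care is the normalization to ancestors of arity exactly $m$ -- without it the second case ($c>n-m$) fails, because a low-arity ancestor such as $U_{1j}$, though it lies in $\Omega^{n}$, would fail to lie in the intersection already for some $c>n-m$; and it is exactly this normalization that makes Corollary~\ref{corollary1} split into precisely $n-m$ terms equal to $H(V_1)$ and $m-1$ terms equal to $H(V_1\mid\Omega^{n})$, which in turn produces the clean bound $(m-1)H(V_j)$ in Theorem~\ref{thm:ineq_mn}.
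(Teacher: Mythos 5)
Your proof is correct and reaches the result by a genuinely different mechanism than the paper. The paper argues globally: the left-hand side is invariant under permutations of the indices $2,\dots,n$, hence equals $\emptyset$ or $\Omega^n$; the case $c\le n-m$ is then settled by De Morgan plus the single witness $s=[n-m+1]\setminus\{1\}$, for which $\bigcap_{j\in s}A_j^{C}=\{U_{1,(n-m+2),\dots,n}\}$ is nonempty, and the case $c>n-m$ is dispatched with ``works equivalently.'' You argue pointwise: an ancestor $U_T$ with $1\in T$ and $|T|=m$ lies in the intersection iff every $c$-subset of $[n]\setminus\{1\}$ hits $T\setminus\{1\}$, iff $c>n-m$; both branches then follow at once. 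Your route buys two things. It treats the two cases uniformly instead of leaving one to analogy, and it does not need the ancestor structure to be permutation-symmetric (i.e.\ every $m$-subset containing $1$ to be realized as an ancestor), which the paper's symmetry step and its identification of $Z$ with a single ancestor both tacitly require -- you need only that each realized ancestor has arity exactly $m$. You are also right to surface that arity normalization explicitly: the paper uses it silently, the branch $c>n-m$ genuinely fails for ancestors of arity less than $m$, and your WLOG (padding a low-arity ancestor with inert edges to further observables, which leaves the observable joint distribution unchanged) is a sound way to secure it and fills a small gap in the paper's exposition.
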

\begin{proof}
 We start with the first case. As the left-hand term is invariant under permutations of the indices $2,...,n$ it can either be $\Omega^n$ or $\emptyset$. So we only have to prove
 \begin{equation}
 \bigcap_{\substack{s\subseteq [n]\backslash\{1\}\\|s|=c}}[\cup_{j\epsilon s}A_{j}]\neq \Omega^n,
  \end{equation}
   which is equivalent to
\begin{eqnarray*}
  \bigcup_{\substack{s\subseteq [n]\backslash\{1\}\\|s|=c}}[\cup_{j\epsilon s}A_{j}]^{C}&\neq& \emptyset\\
  \text{and }\bigcup_{\substack{s\subseteq [n]\backslash\{1\}\\|s|=c}}[\cap_{j\epsilon s}A_{j}^{C}]&\neq& \emptyset.
 \end{eqnarray*}
It is sufficient to present one $s\subseteq [n]\backslash\{1\}$ with $|s|= n-m$ such that $\cap_{j\epsilon s}A_{j}^{C}\neq \emptyset$.

We take $s=[n-m+1]\backslash\{1\}$ and get
  \begin{equation*}
   \bigcap_{j\epsilon s}A_{j}^{C}=\bigcap_{j=2}^{n-m+1}A_{j}^{C}=:Z.
  \end{equation*}
We highlight that $A_{j}^{C}$ is the set of all latent ancestors that are connected to $V_1$ but not to $V_{j}$. So $Z$ is the set of all latent ancestors
$U_{klm...}$ that contain none of the indices in $[n-m+1]\backslash\{1\}$. More precisely, it is the set that consists  only of $U_{1,(n-m+2),...,n}$ (because these are the
remaining $m$ indices).
It follows that $ \bigcap_{j\epsilon s}A_{j}^{C}=U_{1,(n-m+2),...,n}$ and the first part of the lemma is proven. The proof of the second part works equivalently.
\end{proof}

We are now ready to prove \textbf{Theorem 1} (inequality (10)) of the main text.
\begin{theorem}
\label{main}
 For any data that can be explained by a CM DAG where every latent ancestors has at most $m$ children, the inequality
\begin{equation}
 \sum_{i=2}^{n}I(V_1:V_{i})\leq (m-1)H(V_1)
\end{equation}
holds.
 \end{theorem}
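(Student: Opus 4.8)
The plan is to assemble the supplemental lemmas into a short chain of inequalities, treating $j=1$ without loss of generality. First I would apply the data processing inequality in the form that $V_1$ and $V_i$ are d-separated by the set $A_i$ of all latent ancestors they share (in a CM DAG every path between two observables is either a fork through a common ancestor or is blocked at a childless observable collider), so that $I(V_1:V_i\mid A_i)=0$. Hence $I(V_1:V_i)\le I(V_1:V_i,A_i)=I(V_1:A_i)=H(V_1)-H(V_1\mid A_i)$, and summing over $i=2,\dots,n$ yields
\[
  \sum_{i=2}^{n} I(V_1:V_i)\;\le\;(n-1)H(V_1)-\sum_{i=2}^{n}H(V_1\mid A_i).
\]

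Next I would lower-bound the subtracted sum. Corollary~\ref{corollary1} with $n'=n$ gives $\sum_{i=2}^{n}H(V_1\mid A_i)\ge\sum_{k=2}^{n}H\!\left(V_1\mid\bigcap_{|s|=k-1}[\cup_{j\in s}A_j]\right)$, and Lemma~\ref{lemma5} evaluates each conditioning set: with $c=k-1$ it equals $\emptyset$ when $c\le n-m$ and $\Omega^n$ when $c>n-m$. Among the indices $k=2,\dots,n$ there are exactly $n-m$ with $k-1\le n-m$, each contributing $H(V_1\mid\emptyset)=H(V_1)$, and exactly $m-1$ with $k-1>n-m$, each contributing $H(V_1\mid\Omega^n)$. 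Therefore $\sum_{i=2}^{n}H(V_1\mid A_i)\ge(n-m)H(V_1)+(m-1)H(V_1\mid\Omega^n)$. Substituting into the previous display and cancelling the $(n-m)H(V_1)$ terms gives
\[
  \sum_{i=2}^{n} I(V_1:V_i)\;\le\;(m-1)H(V_1)-(m-1)H(V_1\mid\Omega^n)\;\le\;(m-1)H(V_1),
\]
the last step by nonnegativity of conditional entropy; relabelling the distinguished variable yields inequality~(10) for an arbitrary $V_j$. (One should note the hypothesis $n\ge m$ is implicitly in force, so that $n-m\ge 0$ and all the counts are nonnegative; if $n<m$ one simply replaces $m$ by $n$, for which the bound is immediate from $I(V_1:V_i)\le H(V_1)$.)

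From the vantage point of this excerpt the substance has already been carried out in the supplement, so the remaining work is essentially bookkeeping. The genuinely delicate pieces, and where I expect the main difficulty to sit, are the two combinatorial facts feeding the argument: that the iterated application of submodularity (Lemma~\ref{lemma2}) telescopes \emph{exactly} along nested intersections of ancestor-unions — the set identities $Z\cap Y$ and $Z\cup Y$ in Lemma~\ref{lemma4} and the induction in Corollary~\ref{corollary1} that propagates each freed term into the next term of the target sum — and the counting in Lemma~\ref{lemma5} of precisely how many of the intersections $\bigcap_{|s|=c}[\cup_{j\in s}A_j]$ collapse to $\emptyset$ versus $\Omega^n$ as a function of $c$, $n$ and $m$. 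Once those are granted, the proof is the three-line computation above.
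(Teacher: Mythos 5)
Your proposal is correct and follows essentially the same route as the paper's supplemental proof: data processing to pass from $I(V_1:V_i)$ to $I(V_1:A_i)$, Corollary~\ref{corollary1} to rearrange the sum of conditional entropies, and Lemma~\ref{lemma5} with the same count of $n-m$ empty intersections versus $m-1$ copies of $\Omega^n$, finishing by nonnegativity of $H(V_1\mid\Omega^n)$. The only additions are your explicit d-separation justification of the data-processing step and the remark on $n\ge m$, both of which are consistent with (and implicit in) the paper's argument.
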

 \begin{proof}
  We start with the left-hand side, use the data processing inequality, apply Corollary (\ref{corollary1}) and Lemma (\ref{lemma5}) and bound again, to get
 \begin{eqnarray}
 &&\sum_{i=2}^{n}I(V_1:V_{i})\\ \nonumber
 &\leq&\sum_{i=2}^{n}I(V_1:A_{i})\\ \nonumber
 &\leq&(n-1)H(V_1)-\sum_{i=2}^{n}H(V_1|A_{i})\\ \nonumber
 &\leq&(n-1)H(V_1)-\sum_{k=2}^{n}H(V_1|\bigcap_{\substack{s\subseteq [n]\backslash\{1\}\\|s|=k-1}}[\cup_{j\epsilon s}A_{j}])\\ \nonumber
 &=&(n-1)H(V_1)-(m-1)H(V_1|\Omega)-(n-m)H(V_1)\\ \nonumber
 &\leq& (m-1)H(V_1).
\end{eqnarray}

 \end{proof}

The labeling of the variables is arbitrary but the inequalities are not symmetric under change of indices.
Changing the observable called $V_{1}$ will lead to a different inequality.
Therefore for every $m$ we get $n$ different inequalities.

\end{widetext}

\end{document}